\newtheorem{corollary}{Corollary}
\newtheorem{thm}{Theorem}
\newtheorem{lemma}{Lemma} 
\newtheorem{remark}{Remark}
\newcommand{\argmax}{\mathop{\mathrm{argmax}}\limits}
\newcommand{\expect}[2]{{\ensuremath{\mathbb{E}_{#1}\left[{#2}\right]}}}
\newcommand{\proj}{\ensuremath{\Pi}}
\newcommand{\pseudo}{\ensuremath{\bar \Sigma}}
\newcommand{\candidates}{\ensuremath{T}}
\newcommand{\name}{\text{OST}}
\newcommand\numberthis{\addtocounter{equation}{1}\tag{\theequation}}
\title{Learning Kernel Tests Without Data Splitting}
\author{%
  Jonas M. K\"ubler\\
  MPI for Intelligent Systems, T\"ubingen\\
  \texttt{jmkuebler@tue.mpg.de} \\
   \And
   Wittawat Jitkrittum\thanks{Now with Google Research} \\
   MPI for Intelligent Systems, T\"ubingen \\
   \texttt{wittawatj@gmail.com} \\
   \AND
   Bernhard Sch\"olkopf \\
   MPI for Intelligent Systems, T\"ubingen \\
   \texttt{bs@tue.mpg.de} \\
   \And
   Krikamol Muandet \\
   MPI for Intelligent Systems, T\"ubingen \\
   \texttt{krikamol@tue.mpg.de} \\
}
\begin{document}

\maketitle

\begin{abstract}
    Modern large-scale kernel-based tests such as maximum mean
    discrepancy (MMD) and kernelized Stein discrepancy (KSD) 
    optimize kernel hyperparameters on a held-out sample via data splitting to obtain the most powerful test statistics.
    While data splitting results in a
    tractable null distribution, it suffers from a reduction in test power due to
    smaller test sample size.
    Inspired by the selective inference framework, we propose an approach that enables learning the hyperparameters and
    testing on the full sample without 
    data splitting.
    Our approach can correctly calibrate the test in the
    presence of such dependency, and yield a test threshold in closed form.
    At the same significance level, our
    approach's test power is empirically larger than that of the data-splitting approach, regardless
    of its split 
    proportion. 
\end{abstract}

\section{Introduction}\label{sec:intro}

Statistical hypothesis testing is a ubiquitous problem in numerous fields ranging from astronomy and high-energy physics to medicine and psychology \citep{Neyman33:Testing}.
Given a hypothesis about a natural phenomenon, it prescribes a systematic way 
to test 
the hypothesis
empirically \citep{Lehmann05:Testing}.
Two-sample testing, for instance, 
addresses whether 
two 
samples 
originate
from the 
same
process, which is instrumental in experimental science such as psychology, medicine, and economics.
This 
procedure
of rejecting false hypotheses while retaining the correct ones governs most 
advances
in science.

Traditionally, test statistics are usually fixed prior to the testing phase. 
In modern-day hypothesis testing, 
however, 
practitioners often face 
a large family of test statistics from which the best one must be selected before performing the test. 
For instance, the popular kernel-based two-sample tests \citep{gretton2012kernel,Gretton2012optimal} and goodness-of-fit tests \citep{Liu16:KSD,Chwialkowski16:KGFT} require the specification of a kernel function and its parameter values.
Abundant evidence suggests that finding good parameter values for these tests improves their performance in the testing phase \citep{sutherland2016generative, SceVar2019, Gretton2012optimal,JitSzaChwGre2016}. As a result, several approaches have recently been proposed to learn 
optimal tests directly from data using different techniques such as optimized kernels \citep{Gretton2012optimal,JitKanSanHaySch2018,JitXuSzaFukGre2017,JitSzaGre2017,JitSzaChwGre2016, liu2020learning}, classifier two-sample tests \citep{kim2016classification, LopOqu2017}, and deep neural networks \citep{CheClo2019,KirKhoKloLip2019}, to name a few.
In other words, the modern-day hypothesis testing has become a two-stage ``learn-then-test'' problem.

Special care must be taken in the subsequent testing when optimal tests are learned from data.
If the same data is used for both learning and testing, it becomes harder to derive the asymptotic null distribution because 
the selected 
test and the data are now dependent.
In this case, conducting the tests as if the test statistics are independent from the data leads to an uncontrollable false positive rate, see, e.g., our experimental results.
While permutation testing can be applied \citep{Fisher35:ExpDesign}, it is too computationally prohibitive for real-world applications.
Up to now, the most prevalent solution is \emph{data splitting}: the data is randomly split into two parts, of which the former is used for learning the test while the latter is used for testing.
Although data splitting is simple and in principle leads to the correct false positive rate, its downside is a potential loss of power.  

In this paper, we investigate the two-stage ``learn-then-test'' problem in the context of modern kernel-based tests \citep{gretton2012kernel,Gretton2012optimal,Liu16:KSD,Chwialkowski16:KGFT} where the choice of kernel function and its parameters play an important role.
The key question is \emph{whether it is possible to employ the full sample for both learning and testing phase without data splitting, while correctly calibrating the test in the presence of such dependency}.
We provide an affirmative answer if we learn the test from a vector of jointly normal base test statistics, e.g., the linear-time MMD estimates of multiple kernels. 
The empirical results suggest that, at the same significance level, the test power of our approach is larger than that of the data-splitting approach, regardless of the split proportion (cf. Section \ref{sec:experiments}).
The code for the experiments is available at \url{https://github.com/jmkuebler/tests-wo-splitting}.

\section{Preliminaries}
\label{sec:preliminaries}
We start with some background material on conventional hypothesis testing and review linear-time kernel two-sample tests. 
In what follows, we will use $[d] := \{1,\dots,d\}$ to denote the set of natural numbers up to $d \in \mathbb{N}$, $\bm\mu \geq \bm 0$ to denote that all entries of $\bm\mu \in \mathbb{R}^d$ are non-negative, $e_i$ to denote the $i$-th Cartesian unit vector, and $\|\cdot\| :=\|\cdot\|_2$.

\vspace{-5pt}
\paragraph{Statistical hypothesis testing.}
Let $Z$ be a random variable taking values in $\mathcal{Z}\subseteq\mathbb{R}^p$  distributed according to a distribution $P$. 
The goal of statistical hypothesis testing is to decide whether some \emph{null hypothesis} $H_0$ about $P$ can be rejected in favor of an \emph{alternative hypothesis} $H_A$  based on empirical data
\citep{Anderson03:Multivariate,Lehmann05:Testing}. Let $h$ be a real-valued function  such that $0 < \expect{}{h^2(Z)} < \infty$. In this work, we consider 
testing
the null hypothesis 
$
    H_0: \expect{}{h(Z)} = 0$ 
against the one-sided alternative hypothesis 
$H_A: \expect{}{h(Z)} > 0$ for reasons which will become clear later. 
To do so, we define the \emph{test statistic} $
    \tau(Z_n) = \frac{1}{n}\sum_{i=1}^n h(z_i)
$ as the empirical mean of $h$ based on a sample ${Z}_n := \{{z}_1, ..., {z}_n\}$  drawn i.i.d. from $P^n$. 
We reject $H_0$ if the observed test statistic $\hat{\tau}(Z_n)$ is \emph{significantly} larger than what we would expect if $H_0$ was true, i.e., if $P(\tau(Z_n) <\hat{\tau}(Z_n)|H_0) > 1- \alpha$. 
Here $\alpha$ is a \emph{significance level}
and controls the probability of incorrectly rejecting $H_0$ (Type-I error). 
For sufficiently large $n$ we can work with the asymptotic distribution of $\tau(Z_n)$, which is characterized by the Central Limit Theorem 
\citep{serfling1980approximation}.
\begin{lemma}\label{lem:asymptotic_normality}
Let $\mu := \expect{}{h(Z)}$ and $\sigma^2 := \text{Var}\left[h(Z)\right]$.
 Then, the test statistic converges in distribution to a Gaussian distribution, i.e.,
$
    \sqrt{n} (\tau(Z_n) - \mu) \overset{d}{\rightarrow} \mathcal{N}(0,  \sigma^2).$
\end{lemma}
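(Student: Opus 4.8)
The plan is to observe that the statistic $\tau(Z_n)$ is nothing more than the empirical mean of the i.i.d.\ random variables $h(z_1),\dots,h(z_n)$, so that the claim reduces to a direct application of the classical Lindeberg--L\'evy Central Limit Theorem. First I would set $Y_i := h(z_i)$ for $i \in [n]$. Because the sample $Z_n$ is drawn i.i.d.\ from $P$ and $h$ is a fixed deterministic function, the $Y_i$ are i.i.d.\ real-valued random variables sharing the common law of $h(Z)$, with $\frac{1}{n}\sum_{i=1}^n Y_i = \tau(Z_n)$.

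Next I would verify the integrability conditions required by the CLT. The standing hypothesis $0 < \expect{}{h^2(Z)} < \infty$ supplies a finite second moment; by Jensen's (equivalently Cauchy--Schwarz) inequality $\expect{}{|h(Z)|} \le \sqrt{\expect{}{h^2(Z)}} < \infty$, so the mean $\mu = \expect{}{h(Z)}$ is well defined, and the variance $\sigma^2 = \expect{}{h^2(Z)} - \mu^2$ is finite. With these in place I would rewrite $\sqrt{n}\,(\tau(Z_n) - \mu) = \sqrt{n}\bigl(\tfrac{1}{n}\sum_{i=1}^n Y_i - \mu\bigr)$ and invoke the CLT for i.i.d.\ summands with finite variance, concluding $\sqrt{n}\,(\tau(Z_n) - \mu) \overset{d}{\rightarrow} \mathcal{N}(0,\sigma^2)$.

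There is no substantive obstacle here, since the result is a textbook CLT; the only edge case worth a remark is the degenerate situation $\sigma^2 = 0$, in which $h(Z)$ is almost surely constant and the limit collapses to a point mass at $0$, consistent with reading $\mathcal{N}(0,0)$ as the Dirac measure $\delta_0$. Otherwise the standard theorem applies verbatim. The more interesting point, which this lemma is really setting up, is that the identical argument applied coordinate-wise, combined with the Cram\'er--Wold device, upgrades to the \emph{joint} asymptotic normality of a vector of base statistics; this multivariate Gaussian limit is precisely the structure that the later selective-inference construction exploits to calibrate the test without data splitting.
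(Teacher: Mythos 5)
Your proposal is correct and matches the paper's treatment: the paper justifies this lemma simply by invoking the classical Central Limit Theorem for the i.i.d.\ summands $h(z_1),\dots,h(z_n)$ (citing Serfling), which is exactly the Lindeberg--L\'evy argument you spell out, including the finite-second-moment check. Your remarks on the degenerate case $\sigma^2 = 0$ and the Cram\'er--Wold upgrade to joint normality are consistent with how the paper later uses this result.
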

Let $\Phi$ be the CDF of the standard normal and $\Phi^{-1}$ its inverse. 
We define the test threshold $t_\alpha = \sqrt{n} \sigma \Phi^{-1}(1-\alpha)$ as the $(1-\alpha)$-quantile of the null distribution so that $P\left(\tau(Z_n) < t_\alpha |H_0\right) = 1- \alpha$
and we reject $H_0$ simply if $\hat\tau ({Z}_n) > t_\alpha$. 
Besides correctly controlling the Type-I error, the test should also reject $H_0$ as often as possible when $P$ actually satisfies the alternative $H_A$. 
The probability of making a Type-II error is defined as $P\left(\tau(Z_n) < t_\alpha \,|\, H_A\right)$, i.e., the probability of failing to reject $H_0$ when it is
false. 
A powerful test has a small Type-II error while keeping the Type-I error at $\alpha$. 
Since Lemma \ref{lem:asymptotic_normality} holds for any $\mu$, and thus both under  null and alternative hypotheses,
the asymptotic probability of
a Type-II error is \citep{Gretton2012optimal}
\begin{align}\label{eq:asymptotic_power}
    P(\tau(Z_n) < t_\alpha \mid H_A) \approx \Phi\left(\Phi^{-1}(1-\alpha) - \frac{\mu \sqrt{n}}{\sigma} \right).
\end{align}
Since $\Phi$ is monotonic, this probability decreases with $\mu/\sigma$, which we interpret as a signal-to-noise ratio (SNR). It is therefore desirable to find test statistics with high SNR.

\vspace{-5pt}
\paragraph{Kernel two-sample testing.}
As an example that can be expressed in the above form we present kernel two-sample tests. Given two samples $X_n$ and $Y_n$ drawn from 
distributions $P$ and $Q$, the two-sample test aims to decide whether $P$ and $Q$ are different, i.e., $H_0 : P = Q$ and $H_A: P \neq Q$.
A popular test statistic for this problem is the maximum mean discrepancy
(MMD) of \citet{gretton2012kernel}, which is defined based on a positive definite kernel function
$k$ \citep{Scholkopf01:LKS}:
$
\text{MMD}^2[P,Q] 
= \mathbb{E}[k(x,x') + k(y,y') - k(x,y') - k(x',y)] 
    = \mathbb{E}[h(x,x',y,y')],
$
where $x,x'$ are independent draws from $P$, $y,y'$ are independent draws from $Q$, 
and  $h(x,x',y,y') := k(x, x') + k(y, y') - k(x, y') - k(y, x')$.
A minimum-variance unbiased estimator of $\text{MMD}^2$ is given by a second-order $U$-statistic \citep{serfling1980approximation}.
However, this estimator scales quadratically with the sample size, and the distribution under $H_0$ is not available in closed form. 
Thus it has to be 
simulated
either via a bootstrapping approach or via a permutation of the samples. For large sample size, the computational requirements become prohibitive \citep{gretton2012kernel}. In this work, we assume we are in this regime. To circumvent these computational burdens, 
\citet{gretton2012kernel} suggest a ``linear-time'' MMD estimate that scales
linearly with sample size and is asymptotically normally distributed under
both null and alternative hypotheses.
Specifically, let $X_{2n} = \{x_1,\ldots,x_{2n} \}$ and $Y_{2n} = \{y_1,\ldots,y_{2n} \}$, i.e., the samples are of the same (even) size. 
We can define $z_i := (x_i, x_{n+i}, y_i, y_{n+i})$ and $\tau(Z_n) := \frac{1}{n}\sum_{i=1}^n h(z_i)$ as the test statistic, which by Lemma \ref{lem:asymptotic_normality} is asymptotically normally distributed. 
Furthermore, if the kernel $k$ is characteristic \citep{Sriperumbudur2010}, it is guaranteed that $\text{MMD}^2(P, Q) = 0$ if  $P = Q$  and $\text{MMD}^2(P, Q) > 0$ otherwise. Therefore, a one-sided test is sufficient. 

Other
well-known examples 
are goodness-of-fit tests based on the kernelized Stein discrepancy (KSD), which  also has a linear time estimate \citep{Chwialkowski16:KGFT,Liu16:KSD}. In our experiments, we focus on the kernel two-sample test, but point out that our theoretical treatment in Section \ref{sec:learn-to-test} is more general and can be applied to other problems, e.g., KSD goodness-of-fit tests, but also beyond kernel methods.

\section{Selective hypothesis tests}
\label{sec:learn-to-test}

Statistical lore tells us \emph{not to use the same data for learning and testing.} We now discuss 
whether
it is indeed possible to use the same data for selecting a test statistic from a candidate set and conducting the selected test
\citep{fithian2014optimal}. The key to controllable Type-I errors is that we need to adjust the test threshold to account for the selection event.  
As before, let $Z_n$ denote the data we collected. Let $\candidates = \{\tau_i\}_{i\in \mathcal{I}}$ be a countable set of candidate test statistics that we evaluate on the data $Z_n$, and $\{t_\alpha^i\}_{i\in \mathcal{I}}$ the respective test thresholds.
Assume that $\{A_i\}_{i\in \mathcal{I}}$ are disjoint \emph{selection events} depending on $Z_n$ and that their outcomes determine which test statistic out of $T$ we apply.
Thus, 
all the tests and events are generally dependent 
via $Z_n$. To define a \emph{well-calibrated} test, we need to control the overall Type-I error, i.e., $P(\text{reject} | H_0)$. Using the law of total probability, we can rewrite this in terms of the selected tests
\begin{align}\label{eq:conditional-type-i}
    P(\text{reject} | H_0) = \sum_{i \in \mathcal{I}} P(\tau_i > t_\alpha^i | A_i, H_0) P(A_i | H_0).
\end{align}
To control the Type-I error $P(\text{reject} | H_0) \leq \alpha$, it thus suffices to control $P(\tau_i > t_\alpha^i | A_i, H_0) \leq \alpha$ for each $i\in\mathcal{I}$, i.e., the test thresholds need to take into account the conditioning on the selection event $A_i$. 
A \emph{naive} approach would wrongly calibrate the test such that $P(\tau_i > t_\alpha^i|H_0) \leq \alpha$, not accounting for the selection $A_i$ and thus 
would result
in an uncontrollable Type-I error. 
On the other hand, this reasoning directly tells us why data splitting works. There $A_i$ is evaluated on a split of $Z_n$ that is independent of the split used to compute $\tau_i$ and hence $P(\tau_i > t_\alpha^i | A_i, H_0) = P(\tau_i > t_\alpha^i| H_0)$. 

\vspace{-5pt}
\paragraph{Selecting tests with high power.}
Our objective  
in selecting
the test statistic is to maximize the power of the selected test.
To this end, 
we start from $d \in\mathbb{N}$ different \emph{base functions} 
$h_1,...,h_d$.
Based on observed data ${Z}_n = \{{z}_1, \ldots, {z}_n\} \sim P^n$, we can compute $d$ \emph{base} test statistics 
${\tau}_u :=\tau_u({Z}_n) = \frac{1}{n}\sum_{i=1}^n h_u({z}_i)$ 
for $u\in [d]$. Let $\bm\tau := (\tau_1, \ldots, \tau_d)^\top$ and $\bm\mu:= \expect{}{\bm h(Z)}$, where $\mathbf{h}(Z) = (h_1(Z),\ldots,h_d(Z))^\top$. 
Asymptotically, we have $\sqrt{n}(\bm\tau -\bm \mu) \overset{d}{\rightarrow} \mathcal{N}(\bm 0, \Sigma)$, with the variance of the asymptotic distribution given by $\Sigma = \text{Cov}[\bm h(Z)]$.\footnote{ 
In practice, we work with an estimate $\hat{\Sigma}$ of the covariance obtained from $Z_n$, which is justified since $\sqrt{n}\hat{\Sigma}^{-\frac{1}{2}}(\bm\tau -\bm \mu) \overset{d}{\rightarrow} \mathcal{N}(\bm 0, \mathop{I}_d)$ for  consistent estimates of the covariance.}
Now, for any $\bm\beta \in \mathbb{R}^d\setminus\{\bm 0\}$ that is independent of $\bm\tau$, the normalized test statistic $\tau_{\bm\beta} := \frac{\bm{\beta}^\top \bm\tau}{(\bm{\beta}^\top\Sigma \bm{\beta})^\frac{1}{2}}$ is asymptotically normal, i.e., $\sqrt{n}\left(\tau_{\bm\beta} - \frac{\bm{\beta}^\top \bm\mu}{(\bm{\beta}^\top\Sigma \bm{\beta})^\frac{1}{2}}\right) \overset{d}{\rightarrow} \mathcal{N}(0, 1)$. Following our considerations of Section \ref{sec:preliminaries}, the test with the highest power is defined by
\begin{align}\label{eq:beta_inf}
    \bm\beta^\infty :&= \argmax_{\|\bm\beta\|=1} \frac{\bm{\beta}^\top \bm\mu}{(\bm{\beta}^\top\Sigma \bm{\beta})^\frac{1}{2}} =\frac{\Sigma^{-1} \bm\mu}{\|\Sigma^{-1} \bm\mu\|},
\end{align}
where the constraint $\|\bm\beta\| =1$ is to ensure that the solution is unique,  since the objective of the maximization is a homogeneous function of order $0$ in $\bm\beta$.
The explicit form of $\bm\beta^\infty$ is proven in Appendix \ref{sec:proof_beta_inf}. 
Obviously, in practice, $\bm\mu$ is not known, so we use an estimate of $\bm\mu$ to select $\bm\beta$. The standard strategy to do so is to split the sample $Z_n$ into two independent sets and estimate $\bm\tau_\text{tr}$ and $\bm\tau_\text{te}$, i.e., two independent training and test realizations \citep{Gretton2012optimal,JitSzaChwGre2016, SceVar2019, liu2020learning}. One can then choose a suitable $\bm\beta$ by using $\bm\tau_\text{tr}$ as a proxy for $\bm\mu$. Then one tests with this $\bm\beta$ and $\bm\tau_\text{te}$. 
However, to our knowledge, there exists no principled way to decide in which
proportion to split the data, which will generally influence the power, as  shown in our experimental results in Section \ref{sec:experiments}. 

Our approach to maximizing the utility of the observed dataset is to use it for
both learning and testing. To do so, we have to derive an adjustment to the distribution of the
statistic under the null, in the spirit of the selective hypothesis testing described above. We will consider three different candidate sets $\candidates$ of test statistics, which are all constructed from the base test statistics $\bm\tau$. 
To do so, we will work with the asymptotic distribution of $\bm\tau$ under the null. To keep the notation concise, we include the $\sqrt{n}$ dependence into $\bm\tau$. Thus, we will assume  $\bm\tau \sim \mathcal{N}(\bm 0, \Sigma)$, where $\Sigma$ is known and strictly positive. 
We provide the generalization to singular covariance in Appendix \ref{app:singular}.

To select the test statistics, we maximize the SNR $\tau_{\bm\beta} = {\bm{\beta}^\top \bm\tau} / {(\bm{\beta}^\top\Sigma \bm{\beta})^\frac{1}{2}}$ and thus the test power over three different sets of candidate test statistics: \begin{inparaenum}
    \item $\candidates_\text{base} = \left\{\tau_{\bm\beta} \, | \, \bm\beta \in \{e_1,\dots, e_d\}\right\}$, i.e., we directly select from the base test statistics, 
    \item $\candidates_\text{Wald} = \left\{\tau_{\bm\beta} \, | \, \|\bm\beta \| =1\right\}$, where we allow for arbitrary linear combinations,
    \item $\candidates_\text{\name} = \left\{\tau_{\bm\beta} \, | \, \Sigma \bm\beta \geq \bm 0, \|\Sigma \bm\beta \| = 1 \right\}$, where we constrain the allowed values 
    to increase the power (see below).
\end{inparaenum}
The rule for selecting the test statistic from these sets is simply to select the one with the highest value. To design selective hypothesis tests, we  need to derive suitable selection events and the distribution of the maximum test statistic conditioned on its selection.

\subsection{Selection from a finite candidate set}\label{sec:discrete}
We start with $\candidates_\text{base} = \left\{\tau_{\bm\beta} \, | \, \bm\beta \in \{e_1,\dots, e_d\}\right\}$ and use the test statistic $\tau_\text{base} = \max_{\tau \in \candidates_\text{base}} \tau$. Since the selection is from a countable set and the selected statistic is a projection of $\bm\tau$, we can use the polyhedral lemma of \citet{Lee2016} to derive the conditional distributions. 
Therefore, we denote $u^* = {\argmax}_{u \in [d]} \frac{\tau_u}{\sigma_u}$, with $\sigma_u := (\Sigma_{uu})^\frac{1}{2}$,
and obtain
$\tau_\text{base} = \frac{\tau_{u^*}}{\sigma_{u^*}}$.  
The following corollary characterizes the conditional distribution. The proof is given in Appendix \ref{app:proof_cor_discrete}.
\begin{corollary}\label{cor:selection_discrete}
Let $\bm\tau \sim \mathcal{N}(\bm\mu, \Sigma)$, $\bm z := \bm \tau - \frac{ \Sigma e_{u^*} \tau_{u^*} }{\sigma^2_{u^*}}$, $\mathcal{V}^-(\hat{\bm z}) = \max_{j \in [d], j\neq u^*}, 
    \frac{\sigma_{u^*} \hat{z}_j}{\sigma_u^* \sigma_j - \Sigma_{u^* j}}$, and $\text{TN}(\mu, \sigma^2, a,b)$ denote a  normal distribution with mean $\mu$ and variance $\sigma^2$ truncated at $a$ and $b$. 
    Then the following statement holds:
\begin{align}\label{eq:asymp-finite}
    \left[\frac{\tau_{u^*}}{\sigma_{u^*}} \left| u^* = \argmax_{u \in [d]} \frac{\tau_u}{\sigma_u}, \bm z = \hat{\bm z}\right.\right] \overset{d}{=} 
\text{TN}\left(\frac{\mu_{u^*}}{\sigma_{u^*}}, 1, \mathcal{V}^- (\hat{\bm{z}}), \mathcal{V}^+ = \infty \right),
\end{align}
\end{corollary}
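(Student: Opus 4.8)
The plan is to cast the selection event as a polyhedral (linear-inequality) constraint on the Gaussian vector $\bm\tau$ and invoke the polyhedral lemma of \citet{Lee2016}, which reduces the work to exhibiting the correct additive decomposition of $\bm\tau$ and verifying the orientation of each inequality. First I would fix the selected index $u^*$ and set $\eta = e_{u^*}$, so that the statistic of interest is $\eta^\top\bm\tau = \tau_{u^*}$ and $\eta^\top\Sigma\eta = \sigma_{u^*}^2$. The vector $\bm z = \bm\tau - \Sigma e_{u^*}\tau_{u^*}/\sigma_{u^*}^2$ is precisely the residual of $\bm\tau$ after removing its linear projection onto $\tau_{u^*}$; a one-line covariance computation, $\mathrm{Cov}(\bm z,\tau_{u^*}) = \Sigma e_{u^*} - (\Sigma e_{u^*}/\sigma_{u^*}^2)\sigma_{u^*}^2 = \bm 0$, together with joint Gaussianity gives $\bm z \perp \tau_{u^*}$. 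This independence is exactly what makes conditioning on $\bm z=\hat{\bm z}$ leave the marginal law of $\tau_{u^*}$ intact apart from truncation.

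Next I would rewrite the selection event $\{u^* = \argmax_u \tau_u/\sigma_u\}$ as the system of inequalities $\tau_{u^*}/\sigma_{u^*} \geq \tau_j/\sigma_j$ for every $j \neq u^*$, and then substitute the identity $\bm\tau = \bm z + \Sigma e_{u^*}\tau_{u^*}/\sigma_{u^*}^2$, i.e.\ $\tau_j = z_j + (\Sigma_{j u^*}/\sigma_{u^*}^2)\tau_{u^*}$. Writing $s := \tau_{u^*}/\sigma_{u^*}$ and collecting terms, each inequality becomes $s\,(\sigma_j\sigma_{u^*}-\Sigma_{j u^*})/(\sigma_j\sigma_{u^*}) \geq z_j/\sigma_j$. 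The crucial step — and the one I expect to be the main obstacle — is pinning down the sign of the coefficient $\sigma_j\sigma_{u^*}-\Sigma_{j u^*}$, since it determines whether each constraint contributes to the lower or to the upper truncation endpoint. Because $\Sigma$ is strictly positive definite, Cauchy--Schwarz gives $|\Sigma_{j u^*}| < \sigma_j\sigma_{u^*}$ for $j \neq u^*$ (equality would force perfect correlation), so the coefficient is strictly positive, and dividing through turns every inequality into a lower bound $s \geq \sigma_{u^*} z_j/(\sigma_j\sigma_{u^*}-\Sigma_{j u^*})$.

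Finally I would assemble the pieces. Taking the maximum of these bounds over $j \neq u^*$ recovers $\mathcal{V}^-(\hat{\bm z})$ exactly as stated, and since no inequality ever produces an upper bound we get $\mathcal{V}^+ = \infty$; thus on $\{\bm z = \hat{\bm z}\}$ the selection event is equivalent to the single interval constraint $s \in [\mathcal{V}^-(\hat{\bm z}),\infty)$. By the independence $\bm z \perp \tau_{u^*}$, the conditional law of $s$ given both $\bm z = \hat{\bm z}$ and the selection event equals its unconditional law $\mathcal{N}(\mu_{u^*}/\sigma_{u^*},1)$ restricted to that interval, which is the truncated normal $\text{TN}(\mu_{u^*}/\sigma_{u^*},1,\mathcal{V}^-(\hat{\bm z}),\infty)$ claimed in \eqref{eq:asymp-finite}. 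The one detail I would flag explicitly is that the marginal variance of $s$ is $1$, since $\tau_{u^*}$ has variance $\sigma_{u^*}^2$, so the normalization is consistent throughout.
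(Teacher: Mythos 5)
Your proposal is correct and follows essentially the same route as the paper's proof: both recast the selection event $\{u^* = \argmax_u \tau_u/\sigma_u\}$ as linear inequalities in $\bm\tau$, decompose $\bm\tau$ into the target statistic plus the independent residual $\bm z$, and use strict positive-definiteness of $\Sigma$ (equivalently, strict Cauchy--Schwarz, $|\Sigma_{u^*j}| < \sigma_{u^*}\sigma_j$ for $j \neq u^*$) to show every constraint is a lower bound, so that $\mathcal{V}^+ = \infty$ and the polyhedral lemma of \citet{Lee2016} yields the one-sidedly truncated normal. The only cosmetic difference is that the paper takes $\bm\eta = e_{u^*}/\sigma_{u^*}$ and reads the truncation points off the lemma's formulas via $(A\bm c)_j$ and $(A\bm z)_j$, whereas you take $\eta = e_{u^*}$, substitute $\tau_j = z_j + \Sigma_{ju^*}\tau_{u^*}/\sigma_{u^*}^2$ by hand, and standardize at the end --- the resulting bound $\mathcal{V}^-(\hat{\bm z})$ is identical.
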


This scenario arises, for example, in kernel-based tests when the kernel parameters are chosen from a grid of predefined values \citep{gretton2012kernel,Gretton2012optimal}.
Corollary \ref{cor:selection_discrete} allows us to test using the same set of data that was used to select the test statistic, by providing the corrected asymptotic distribution \eqref{eq:asymp-finite}.
The only downside is its dependence on the parameter grid.
To overcome this limitation, several works have proposed to optimize for the parameters directly \citep{Gretton2012optimal,JitKanSanHaySch2018,JitXuSzaFukGre2017,JitSzaGre2017,JitSzaChwGre2016}.
Unfortunately, we cannot apply Corollary \ref{cor:selection_discrete} directly to this scenario.

\subsection{Learning from an uncountable candidate set}\label{sec:continuous_general}

To allow for more flexible tests, in the following we consider the candidate sets $\candidates_\text{Wald}$ and $\candidates_\name$ that contain uncountably many tests. 
For these sets, we cannot directly use \eqref{eq:conditional-type-i} to derive conditional tests, since the probability of selecting some given tests is 0.
However, we show that it is possible 
in
both cases to rewrite the test statistic such that we can build conditional tests based on \eqref{eq:conditional-type-i}.
First, for $T_\text{Wald}$,
we rewrite the entire test statistic including the maximization in closed form. Second, for $T_{\name}$ we derive suitable measurable selection events that allow us to rewrite the conditional test statistic in closed form and derive their distributions in Theorem \ref{thm:continuous}.
\vspace{-5pt}
\paragraph{Wald Test.}
We first allow for arbitrary linear combinations of the base test statistics $\bm\tau$. Therefore, define $\candidates_\text{Wald} = \left\{\tau_{\bm\beta} \, | \, \|\bm\beta \| =1\right\}$ and $\tau_\text{Wald} := \max_{\tau\in \candidates_\text{Wald}} \tau$. We denote the optimal $\bm\beta$ for this set as 
$
    \bm\beta_\text{Wald} := {\argmax}_{\|\bm\beta\|=1} \frac{\bm{\beta}^\top \bm\tau}{(\bm{\beta}^\top\Sigma \bm{\beta})^\frac{1}{2}}.
$
This optimization problem is the same as in \eqref{eq:beta_inf}, hence
$
    \bm\beta_\text{Wald} = \frac{\Sigma^{-1} \bm\tau}{\|\Sigma^{-1} \bm\tau\|},
$
and we can rewrite the "Wald" test statistic as
$
    \tau_\text{Wald} = \frac{\bm{\beta}_\text{Wald}^\top \bm\tau}{(\bm{\beta}_\text{Wald}^\top\Sigma \bm{\beta}_\text{Wald})^\frac{1}{2}} = (\bm\tau^\top\Sigma^{-1} \bm\tau)^\frac{1}{2} = \|\Sigma^{-\frac{1}{2}}\bm\tau \|.
$ 
Note that $\candidates_\text{Wald}$ contains uncountably many tests. However, instead of deriving individual conditional distributions, we can directly derive the distribution of the maximized test statistic, since $\tau_\text{Wald}$ can be written in closed form. In fact, under the null, we have $\Sigma^{-\frac{1}{2}}\bm\tau \sim \mathcal{N}(\bm 0, \mathop{I}_d)$ and 
$\tau_\text{Wald}$
follows a chi distribution with $d$ degrees of freedom. Surprisingly, the presented approach results in the classic Wald test statistic 
\citep{Wald1943}, which originally was defined directly in closed form.

\vspace{-5pt}
\paragraph{One-sided test (\name).} 
The original Wald test was defined to optimally test $H_0: \bm\mu = \bm 0$ against the alternative $H_A: \bm\mu \neq  \bm 0$ \citep{Wald1943}. 
Thus, it 
ignores
the fact
that we only test against the "one-sided" alternative $\bm\mu \geq \bm 0$, which suffices since we consider linear-time estimates of the squared MMD as test statistics 
and their population values are non-negative.
Multiplying \eqref{eq:beta_inf} with $\Sigma$ yields 
$
    \Sigma \bm\beta^\infty = \frac{\bm \mu}{\|\Sigma^{-1}\bm\mu\|}.
    $
    Using $\bm\mu \geq \bm 0$, we find $ \Sigma \bm\beta^\infty \geq \bm 0$.
Thus, we have prior knowledge over the asymptotically optimal combination $\bm\beta^\infty$. 
To incorporate this, we a priori constrain the considered values of $\bm\beta$ 
by 
the condition $\Sigma \bm\beta \geq \bm 0$. 
Thus we define $\candidates_\text{\name} = \left\{\tau_{\bm\beta} \, | \, \Sigma \bm\beta \geq \bm 0, \|\Sigma \bm\beta \| = 1 \right\}$, where the norm constraint  $\|\Sigma \bm\beta\|=1$ is added to make the maximum unique. We suggest using the test statistic 
$\tau_\name := \max_{\tau\in \candidates_\name} \tau$. 
Before we derive suitable conditional distributions for this test statistic, we  rewrite it in a \emph{canonical form}.

\begin{remark}\label{rmk:generalization}
Define $\bm\alpha := \Sigma \bm\beta$, $\bm\rho := \Sigma^{-1}\bm\tau$, and $\Sigma' := \Sigma^{-1}\Sigma\Sigma^{-1} = \Sigma^{-1}.$ This implies $\bm\rho \sim \mathcal{N}(\bm 0, \Sigma')$ and
$
    \tau_\text{\name} := \max_{\|\Sigma\bm\beta\|  =1 ,\Sigma\bm\beta \geq \bm0} \frac{\bm{\beta}^\top \bm\tau}{(\bm{\beta}^\top\Sigma \bm{\beta})^\frac{1}{2}} = \max_{\|\bm\alpha\| =1,\bm\alpha \geq \bm 0} \frac{\bm{\alpha}^\top \bm\rho}{(\bm{\alpha}^\top\Sigma' \bm{\alpha})^\frac{1}{2}}.
$
\end{remark}

Thus in the following, we focus on the canonical form, where the constraints are simply positivity constraints. For ease of notation, we stick with $\bm\tau$ and $\Sigma$ instead of $\bm\rho$ and $\Sigma'$.
We will thus analyze the distribution of
\begin{align}\label{eq:objective}
\max_{\|\bm\beta\|  =1 ,\bm\beta \geq \bm 0} \frac{\bm{\beta}^\top \bm\tau}{(\bm{\beta}^\top\Sigma \bm{\beta})^\frac{1}{2}} = \frac{\bm\beta^{*\top}\bm\tau}{(\bm\beta^{*\top}\Sigma \bm\beta^{*})^\frac{1}{2}}, 
\end{align}
where $\bm \beta^*(\bm\tau) := {\argmax}_{\|\bm\beta\|  =1 ,\bm\beta \geq \bm 0} \frac{\bm{\beta}^\top \bm\tau}{(\bm{\beta}^\top\Sigma \bm{\beta})^\frac{1}{2}}$. 
We emphasize that $\bm\beta^*(\bm\tau)$ is a random variable that is determined by $\bm\tau$. For conciseness, however, we will use $\bm\beta^*$ and keep the dependency implicit. 
We find the solution of \eqref{eq:objective} by solving an equivalent convex optimization problem, which we provide in Appendix \ref{sec:optimization}.
We need to characterize the distribution of \eqref{eq:objective} under the null hypothesis, i.e., $\bm\tau \sim \mathcal{N}(\bm 0, \Sigma)$. 
Since we are not able to give an analytic form for $\bm\beta^*$, it is hard to directly compute the distribution of $\tau_\name$ as we did for the Wald test. 
In Section \ref{sec:discrete} we were able to work around this by deriving the distribution conditioned on the selection of $\bm\beta^*$. 
In the present case, however, there are uncountably many values that $\bm\beta^*$ can take, so for some the probability is zero. Hence, the reasoning of \eqref{eq:conditional-type-i} does not apply and we cannot use the PSI framework of \citet{Lee2016}. 

Our approach to solving this is the following.
Instead of directly conditioning on the explicit value of $\bm\beta^*$, we  condition on the \emph{active set}. 
For a given $\bm\beta^*$, we define the active set as $\mathcal{U} := \{u \,|\, \beta^*_u \neq 0\}\subseteq [d]$. Note that the active set is a function of $\bm\tau$, defined via \eqref{eq:objective}. 
In Theorem \ref{thm:continuous} we show that given the active set, we can derive a closed-form expression for $\bm\beta^*$, and we can characterize the distribution of the test statistic conditioned on the active set. Figure \ref{fig:proof_sketch} depicts the intuition behind Theorem \ref{thm:continuous} and Appendix \ref{app:proof_continuous} contains the full proof.
In the following, let $\chi_l$ denote a chi distribution with $l$ degrees of freedom and $\text{TN}\left(0,1,a,\infty\right)$ denote the distribution of a standard normal RV truncated from below at $a$, i.e., with CDF 
$
        F^{a}(x) = \frac{\Phi(x) - \Phi(a)}{1 - \Phi(a)}.
$
\begin{thm}\label{thm:continuous}
Let $\bm\tau \sim \mathcal{N}(\bm 0, \Sigma)$ be a normal RV in $\mathbb{R}^d$ with positive definite covariance matrix $\Sigma$. Let $\bm\beta^*$ be defined as in \eqref{eq:objective}, $\mathcal{U} := \{u \,|\, \beta^*_u \neq 0\}$, $l := |\mathcal{U}|$, $\bm z := \left({\mathop{I}}_d - \frac{ \Sigma\bm\beta^* \bm\beta^{*\top} }{\bm{\beta}^{*\top}\Sigma \bm{\beta^*}}\right) \bm\tau$, and $\mathcal{V}^-$ as in Corollary \ref{cor:selection_discrete}. 
Then, the following statements hold.
\begin{enumerate}
    \item [1.)] If $l=1$:$
        \qquad \left[
        {\displaystyle\max_{\|\bm\beta\|  =1 ,\bm\beta \geq \bm 0}} \frac{\bm{\beta}^\top \bm\tau}{(\bm{\beta}^\top\Sigma \bm{\beta})^\frac{1}{2}}
        \,\Big|\, \mathcal{U}, \bm z = \hat{\bm z} \right] \overset{d}{=} 
        \text{TN}\left(0,1,\mathcal{V}^- (\hat{\bm{z}}), \infty \right).
        $
    \item[2.)] If $l \geq 2$:$
    \qquad \left[
    \ {\displaystyle\max_{\|\bm\beta\|  =1 ,\bm\beta \geq \bm 0}} \frac{\bm{\beta}^\top \bm\tau}{(\bm{\beta}^\top\Sigma \bm{\beta})^\frac{1}{2}}
    \,\Big|\, \mathcal{U}\right] \overset{d}{=} \chi_l.
    $
\end{enumerate}
\end{thm}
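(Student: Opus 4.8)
The plan is to pass to whitened coordinates, pin down the maximizer $\bm\beta^*$ once its active set $\mathcal{U}$ is fixed, and then exploit the rotational symmetry of the isotropic Gaussian when $l\geq 2$, while falling back to the polyhedral lemma of \citet{Lee2016} in the degenerate case $l=1$. Throughout I write $\bm g := \Sigma^{-1/2}\bm\tau \sim \mathcal{N}(\bm 0, \mathop{I}_d)$ and observe that the substitution $\bm w = \Sigma^{1/2}\bm\beta$ turns \eqref{eq:objective} into $\max\{\bm w^\top \bm g : \|\bm w\| = 1,\ \Sigma^{-1/2}\bm w \geq \bm 0\}$, i.e.\ the support function of the unit sphere intersected with the polyhedral cone $K := \{\bm w : \Sigma^{-1/2}\bm w \geq \bm 0\}$, whose faces are indexed by the sign pattern of $\bm\beta$.

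First I would fix the active set $\mathcal{U}$ and write the stationarity (KKT) conditions of the convex program of Appendix \ref{sec:optimization}. On the coordinates $u \in \mathcal{U}$ the nonnegativity constraints are inactive, so stationarity forces $\bm\tau_{\mathcal{U}} \propto \Sigma_{\mathcal{U}\mathcal{U}}\bm\beta^*_{\mathcal{U}}$; inverting yields the closed form $\bm\beta^*_{\mathcal{U}} \propto \Sigma_{\mathcal{U}\mathcal{U}}^{-1}\bm\tau_{\mathcal{U}}$ and, after substitution, the value $\tau_\name = (\bm\tau_{\mathcal{U}}^\top \Sigma_{\mathcal{U}\mathcal{U}}^{-1}\bm\tau_{\mathcal{U}})^{1/2} = \|\Sigma_{\mathcal{U}\mathcal{U}}^{-1/2}\bm\tau_{\mathcal{U}}\|$. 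Equivalently, letting $S$ be the $l$-dimensional span of the face of $K$ with active set $\mathcal{U}$, I would show that on the selection event the residual $\bm g - \bm g_S$ lies in the normal cone of $K$ at that face, which is contained in $S^\perp$; hence the cone projection coincides with the orthogonal projection, $\tau_\name = \|\Pi_S(\bm g)\| = \|\bm g_S\|$, where $\bm g_S := \Pi_S(\bm g)$ is standard normal on $S$.

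For $l \geq 2$ the crux is the conditioning. I would decompose $\bm g = \bm g_S + \bm g_{S^\perp}$ into independent isotropic Gaussians and argue that the selection event factorizes as $\{\bm g_S/\|\bm g_S\| \in \mathcal{D}\} \cap \{\bm g_{S^\perp} \in \mathcal{N}\}$, where $\mathcal{D}$ is a scale-invariant subset of the unit sphere in $S$ (the relative interior of the face) and $\mathcal{N}$ is the normal cone in $S^\perp$ — crucially a \emph{fixed} cone, since the normal cone of a polyhedron is constant on the relative interior of each face. The first factor constrains only the direction of $\bm g_S$ and the second only $\bm g_{S^\perp}$; since for an isotropic Gaussian the radius $\|\bm g_S\|$ is independent of both the direction $\bm g_S/\|\bm g_S\|$ and of $\bm g_{S^\perp}$, the conditioning leaves $\|\bm g_S\| \sim \chi_l$ unchanged. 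The main obstacle is making this factorization rigorous: I must verify that no part of the event couples the magnitude $\|\bm g_S\|$ to the constraints, i.e.\ that the maximizer genuinely sits in the relative interior of the face (so the $\mathcal{U}$-constraints are strictly inactive) and that the value is positive. This last point is exactly what distinguishes $l\geq 2$ — because $\dim S \geq 2$ there is a transverse direction within the face along which the scale-invariant SNR must be locally maximal, which forces $\bm\beta^*$ onto the positive-value stationary point and rules out the nonpositive (polar-cone) branch.

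Finally, for $l = 1$ this transverse argument is unavailable: the face is a single ray, the SNR is constant along it, and the value can take either sign, so I would instead reduce to Corollary \ref{cor:selection_discrete}. With $\mathcal{U} = \{u^*\}$ we have $\bm\beta^* = e_{u^*}$, $\tau_\name = \tau_{u^*}/\sigma_{u^*}$, and the $\bm z$ of the theorem coincides with that of Corollary \ref{cor:selection_discrete}. The stationarity inequalities $\partial_{\beta_j} f(e_{u^*}) \leq 0$ for $j \neq u^*$ are linear in $\bm\tau$, so the selection event is polyhedral; conditioning on the nuisance $\bm z = \hat{\bm z}$ then restricts the problem to the line $\bm\tau = \hat{\bm z} + \Sigma e_{u^*}\,\tau_{u^*}/\sigma_{u^*}^2$, along which these inequalities become a lower bound on $\tau_{u^*}/\sigma_{u^*}$. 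Solving for that bound reproduces $\mathcal{V}^-(\hat{\bm z})$, and the polyhedral lemma delivers $\text{TN}(0,1,\mathcal{V}^-(\hat{\bm z}),\infty)$. The only delicate bookkeeping is matching the truncation constant to the stated $\mathcal{V}^-$.
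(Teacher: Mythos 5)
Your treatment of the case $l\geq 2$ is correct, and it is essentially the paper's own argument in whitened, coordinate-free form: your $\|\bm g_S\|$ is exactly the paper's $\|\tilde{X}\|_2$ with $\tilde{X}=(\pseudo)^{1/2}\bm\tau$, your residual $\bm g_{S^\perp}$ is $\Sigma^{-1/2}\bm z$, and your factorization of the selection event into a scale-invariant direction set inside $S$ (the paper's Condition 3(b)) times a fixed cone in $S^\perp$ (Condition 1(a)) is the geometric counterpart of the paper's two independence statements \eqref{eq:independence_X.z} and \eqref{eq:independence_of_direction_and_norm}. One genuinely different ingredient is your proof that the optimal value is nonnegative when $l\geq 2$: you argue locally that the antipodal stationary point $-\bm g_S/\|\bm g_S\|$ is a minimum, not a maximum, on the unit sphere of the face once $\dim S\geq 2$, whereas the paper establishes \eqref{eq:positive_if_l>2} by a global case analysis with the Cauchy--Schwarz and triangle inequalities in the proof of Lemma \ref{lemma:conditions_optimum}. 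Your local argument is valid and arguably more transparent.

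The case $l=1$, however, contains a genuine gap. The inequalities you invoke to produce the truncation, $\partial_{\beta_j} f(e_{u^*})\leq 0$ for $j\neq u^*$, are by \eqref{eq:partial_in_term_of_z} precisely $z_j\leq 0$; they are functions of $\bm z$ alone, so after conditioning on $\bm z=\hat{\bm z}$ they read $\hat z_j\leq 0$ and are simply true or false --- along the line $\bm\tau=\hat{\bm z}+\Sigma e_{u^*}\,\tau_{u^*}/\sigma_{u^*}^2$ they impose no constraint on $\tau_{u^*}/\sigma_{u^*}$ whatsoever, and carried out literally your computation would return an untruncated $\mathcal{N}(0,1)$, which is false. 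The truncation $\mathcal{V}^-(\hat{\bm z})$ comes from a family of constraints your event description omits: global optimality. Stationarity at a vertex does not characterize the selection event because the objective is not concave on the feasible cone; when all entries of $\bm\tau$ are negative, \emph{every} vertex $e_u$ satisfies the first-order conditions, and the maximizer is singled out by the comparisons $\tau_{u^*}/\sigma_{u^*}\geq \tau_j/\sigma_j$ for all $j$ --- Condition 2 of the paper's Lemma \ref{lemma:conditions_optimum}, which the paper shows is exactly what cannot be dropped in this regime. It is these comparison inequalities that have $(A\bm c)_j<0$ in Theorem \ref{thm:poly} and rearrange, given $\bm z=\hat{\bm z}$, to $\tau_{u^*}/\sigma_{u^*}\geq \sigma_{u^*}\hat z_j/(\sigma_{u^*}\sigma_j-\Sigma_{u^*j})$, whose maximum over $j$ is $\mathcal{V}^-(\hat{\bm z})$; the stationarity inequalities satisfy $(A\bm c)_j=0$ and drop out of the polyhedral lemma, exactly as in the paper's proof. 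So the fix is to enlarge your selection event to include both families of linear inequalities (as in Corollary \ref{cor:selection_discrete} and the paper's Lemma \ref{lemma:conditions_in_terms_of_z}) and then apply Theorem \ref{thm:poly}; as written, your reduction conflates the two families and attributes the truncation to constraints that cannot supply it.
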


\begin{figure}[t]
\begin{tikzpicture}
[    scale=1.4,
    axis/.style={->, >=stealth'},
    important line/.style={thick},
    dashed line/.style={dashed, thick},
    pile/.style={thick, ->, >=stealth', shorten <=2pt, shorten
    >=2pt},
    every node/.style={color=black}
    ]
 \draw[axis] (-2,0) -- (2,0) node(xline)[right]{$\tau_1$};
 \draw[axis] (0,-2) -- (0,2) node(yline)[right]{$\tau_2$};
 
\draw[dashed, color=blue, thick] (0,0) -- (-2,-2);
\draw[dotted, very thick](0,0) circle (1.);
\fill[pattern color=OliveGreen, pattern=north east lines] (0,0) rectangle (2,2);
\fill[pattern color=purple, pattern=horizontal lines] (0,0) rectangle (2,-2);
\fill[pattern color=purple, pattern=horizontal lines] (0,0) -- (0,-2) -- (-2,-2);
\fill[pattern color=YellowOrange, pattern=vertical lines] (0,0) rectangle (-2,2);
\fill[pattern color=YellowOrange, pattern=vertical lines] (0,0) -- (-2,0) -- (-2,-2);
\node[draw, fill=white] at (1.3,1.4) {$\begin{aligned}
    &l=2\\
    &\bm\beta^* = \frac{\hat{\bm\tau}}{\|\hat{\bm\tau}\|}
\end{aligned}$};
\node[draw, fill=white] at (1.2,-1.5) {$\begin{aligned}
    &l=1\\
    &\bm\beta^* = e_1
\end{aligned}$};
\node[draw, fill=white] at (-1.2,1.5) {$\begin{aligned}
    &l=1\\
    &\bm\beta^* = e_2
\end{aligned}$};
\end{tikzpicture}
\hfill
%
%
\begin{tikzpicture}
[    scale=1.4,
    axis/.style={->, >=stealth'},
    important line/.style={thick},
    dashed line/.style={dashed, thick},
    pile/.style={thick, ->, >=stealth', shorten <=2pt, shorten
    >=2pt},
    every node/.style={color=black}
    ]
 \draw[axis] (-2,0) -- (2,0) node(xline)[right]{$\tau_1$};
 \draw[axis] (0,-2) -- (0,2) node(yline)[right]{$\tau_2$};
 
\draw[dashed, color=blue, thick] (0,0) -- (-2,-2);
\draw[dotted, very thick](0,0) circle (1.);
\fill[pattern color=OliveGreen, pattern=north east lines, opacity=.3] (0,0) rectangle (2,2);
\fill[pattern color=purple, pattern=horizontal lines, opacity=.3] (0,0) rectangle (2,-2);
\fill[pattern color=purple, pattern=horizontal lines, opacity=.3] (0,0) -- (0,-2) -- (-2,-2);
\fill[pattern color=YellowOrange, pattern=vertical lines, opacity=.3] (0,0) rectangle (-2,2);
\fill[pattern color=YellowOrange, pattern=vertical lines, opacity=.3] (0,0) -- (-2,0) -- (-2,-2);
\draw[very thick, ->] (0,0) -- (1.2,-1.3) node[above right]{$\hat{\bm\tau}$};
\draw[very thick, ->] (0,0) -- (1,0) node[above left]{$\bm\beta^*$};
\draw[very thick, ->] (0,0) -- (0,-1.3) node[above left]{$\hat{\bm{z}}$};
\draw[dashed] (-1.3, -1.3) -- (-1.3, 0);
\filldraw(-1.3,0) node[above]{$\mathcal{V}^-$};
\draw[dashed] (-1.3, -1.3) -- (2, -1.3);
\draw[decoration={brace,mirror,raise=1pt},decorate, thick]
  (0,-1.3) -- node[below=2pt] {$\frac{\bm\beta^{*\top}\hat{\bm\tau}}{\left(\bm\beta^{*\top}\Sigma \bm\beta^*\right)^\frac{1}{2}}$} (1.2,-1.3);
\end{tikzpicture}
    \caption{Geometric interpretation of Theorem \ref{thm:continuous} for $d=2$ and unit covariance $\Sigma = \mathop{I}$ (denoted by the black dotted unit-circle). \textbf{Left:} If $\hat{\bm\tau}$ is in the positive quadrant (green), the constraints of the optimization are not active and the optimal direction is the same as for the Wald test, hence the distribution of the test statistic follows $\chi_2$. When $\hat{\bm\tau}$ is in the orange or purple zone, one of the constraints is active and $\bm\beta^*$ is a canonical unit-vector.
    \textbf{Right:} If $l=1$, for example when only the first direction is active, we additionally condition on $\bm{z}=\hat{\bm{z}}$, which is independent of the value of $\bm\beta^{*\top}\bm\tau$ since $\bm{z}$ is orthogonal to $\bm\beta^*$. For the observed value $\hat{\bm{z}}$, we only select $\bm\beta^* = e_1$ if $\bm\beta^{*\top}\bm\tau \geq \mathcal{V}^-$. If this was not the case, then $\bm\tau$ would lie in the orange/vertically lined region and we would select $\bm\beta^* = e_2$. This explains the truncated behavior and is in analogy to the results of \citet{Lee2016}.}
    \label{fig:proof_sketch}
\end{figure}
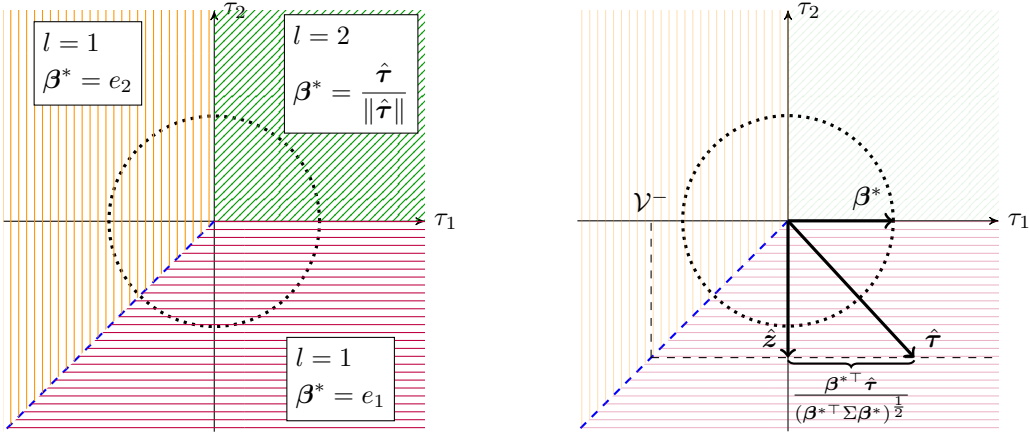

With Theorem \ref{thm:continuous} and Remark \ref{rmk:generalization}, we are able to define conditional hypothesis tests with the  test statistic $\tau_\name$. 
First, we transform our observation $\hat{\bm\tau}$ according to Remark \ref{rmk:generalization} to obtain it in canonical form, i.e., $\hat{\bm\tau} \rightarrow \Sigma^{-1} \hat{\bm\tau}$ and $\Sigma \rightarrow \Sigma^{-1}$. Then we solve the optimization problem \eqref{eq:objective} to find $\bm\beta^*$. Next, we define the active set $\mathcal{U}$, by checking which entries of $\bm\beta^*$ are non-zero. Theorem \ref{thm:continuous} characterizes the distribution $\tau_\name$ conditioned on the selection.
We can then define a test threshold $t_\alpha$
that accounts for the selection of $\mathcal{U}$, i.e.,
\begin{align}
    t_\alpha = \begin{cases}\Phi^{-1} \left((1-\alpha)(1-\Phi(\mathcal{V}^-)) + \Phi(\mathcal{V}^-)  \right) & \text{ if } |\mathcal{U}| = 1, \\
    \Phi^{-1}_{\chi_l}(1-\alpha) & \text{ if } |\mathcal{U}| = l\geq 2,
    \end{cases}
\end{align}
with $\Phi^{-1}_{\chi_l}$ being the inverse CDF of a chi distribution with $l$ degrees of freedom, which we can evaluate using standard libraries, e.g., \citet{scipy}. 
We can then reject the null, if the observed value of the optimized test statistic exceeds this threshold, i.e., $\hat{\tau}_\name > t_\alpha$.
We summarize the entire approach in Algorithm \ref{alg:continuous}.

\section{Related work}

Our work is best positioned in the context of modern statistical tests with
tunable hyperparameters. \citet{Gretton2012optimal} were the
first to propose a kernel two-sample test that optimizes the kernel
hyperparameters by maximizing the test power.
This influential work has led to
further development of 
optimized kernel-based tests
\citep{sutherland2016generative, JitKanSanHaySch2018, JitSzaGre2017, JitXuSzaFukGre2017,
JitSzaChwGre2016,SceVar2019}. 
Since any universally consistent binary classifier can be used to construct a valid two-sample test \citep{Friedman03:Tests,BS09:KernelChoice}, 
\citet{kim2016classification,LopOqu2017} used classification accuracy as a proxy to train machine learning models for two-sample tests.
\citet{CaiGogJia2020, KirKhoKloLip2019} studied this further, and 
\citet{CheClo2019} proposed using the difference
of a trained deep network's expected logit values  
as the test statistic
for two-sample tests. 

All the aforementioned ``learn-then-test'' approaches optimize
hyperparameters (e.g., kernels, weights in a network) on a training set which
is split from the full dataset. While the null distribution becomes tractable
due to the independence between the optimized hyperparameters and the test
set, there is a potential reduction of test power because of a smaller test
set. This observation is the main motivation for our consideration of selective hypothesis tests, which allow the full dataset to be used for both training and
testing by correcting for the dependency, as we discuss in Section \ref{sec:learn-to-test}.

More broadly, properly assessing the strength of potential associations that have been previously learned from the data falls under an emerging subfield of statistics known as \emph{selective inference} \citep{Taylor15:SI}. 
A seminal work of \citet{Lee2016} proposed a
post-selection inference (PSI) framework to characterize the valid
distribution of a post-selection estimator where model selection is performed
by the Lasso \citep{Tibshirani96:Lasso}.
The PSI framework has been applied to kernel tests, albeit in different context, for selecting the most
informative features for supervised learning \citep{YamUmeFukTak2018,SliChaAzeVer2019},
selecting a subset of features that best discriminates two 
samples
\citep{yamada2018post}, as well as selecting a model with the best fit from a
list of candidate models \citep{LimYamSchJit2019}. 
All these applications of the PSI framework consider a finite candidate set.
Our Theorem \ref{thm:continuous} can be seen as an extension of the previously known results of \citet{Lee2016} to uncountable candidate sets. To our knowledge, our work is the first to explicitly maximize test power by using the same data for selecting and testing. 


Unfortunately, we cannot directly use our results to optimize tests based on complete U-statistics estimates of the MMD, which would be desirable since those estimates have lower variance than the \emph{linear} version we use. The difficulty arises since our method requires asymptotic normality under the null, which is not the case for complete U-statistics \citep{gretton2012kernel}. To circumvent this problem, 
\citet{yamada2018post} considered incomplete U-statistics  \citep{Janson1984} and \citet{ZarGreBla2013} used a Block estimate of the MMD. Under the null, these approaches either have approximately asymptotic normal distribution \citep{yamada2018post} or require a higher sample size to reach the asymptotic normality \citep{ZarGreBla2013}. In principle thus our approach is applicable with these methods if one is willed to assume asymptotic normality and to neglect the induced errors. Besides that, since the linear-time estimate has lowest computational cost, it should generally be used in the \emph{large-data, constraint-computation} regime \citep{Gretton2012optimal}. On the other hand one should consider the other approaches when the computational efforts are not the limiting factor.

Moreover, under the assumption that $\bm \tau \sim \mathcal{N}(\bm\mu, \Sigma)$, similar scenarios have previously been investigated in the traditional statistical literature, but the idea of data splitting is not considered there.  
In particular, our construction of $\tau_\text{Wald}$ turned out to coincide with the test statistic suggested in \citet{Wald1943}. 
The one-sided version $\tau_\name$ also has a twin named ``\emph{chi-bar-square}'' test previously considered in \citet{kudo1963multivariate}.
While their test statistic is constructed to be always non-negative, our $\tau_\name$ can be negative.
Furthermore, they derived the distribution of the test statistic by decomposing the distribution into $2^d$ selection events, which, however, ``\emph{may represent a quite difficult problem}'' \citep[p.~54]{Shapiro88}. 
Our work circumvents this difficulty by defining a conditional test, which does not require calculating any probability of the selection events. Another difference is that our approach only defines $2^d-1$ different active sets, by enforcing $\bm\beta\neq \bm 0$.
It is instructive to note that there exist other more complicate settings of ``learn-then-test'' scenarios in which the normality assumption may not hold \citep{LopOqu2017,KirKhoKloLip2019,CheClo2019,CaiGogJia2020}. 
Extending our work towards these scenarios remains an open, yet promising problem to consider.

\section{Experiments}
\label{sec:experiments}
\begin{figure}[t]
\centering
\includegraphics[width=\linewidth]{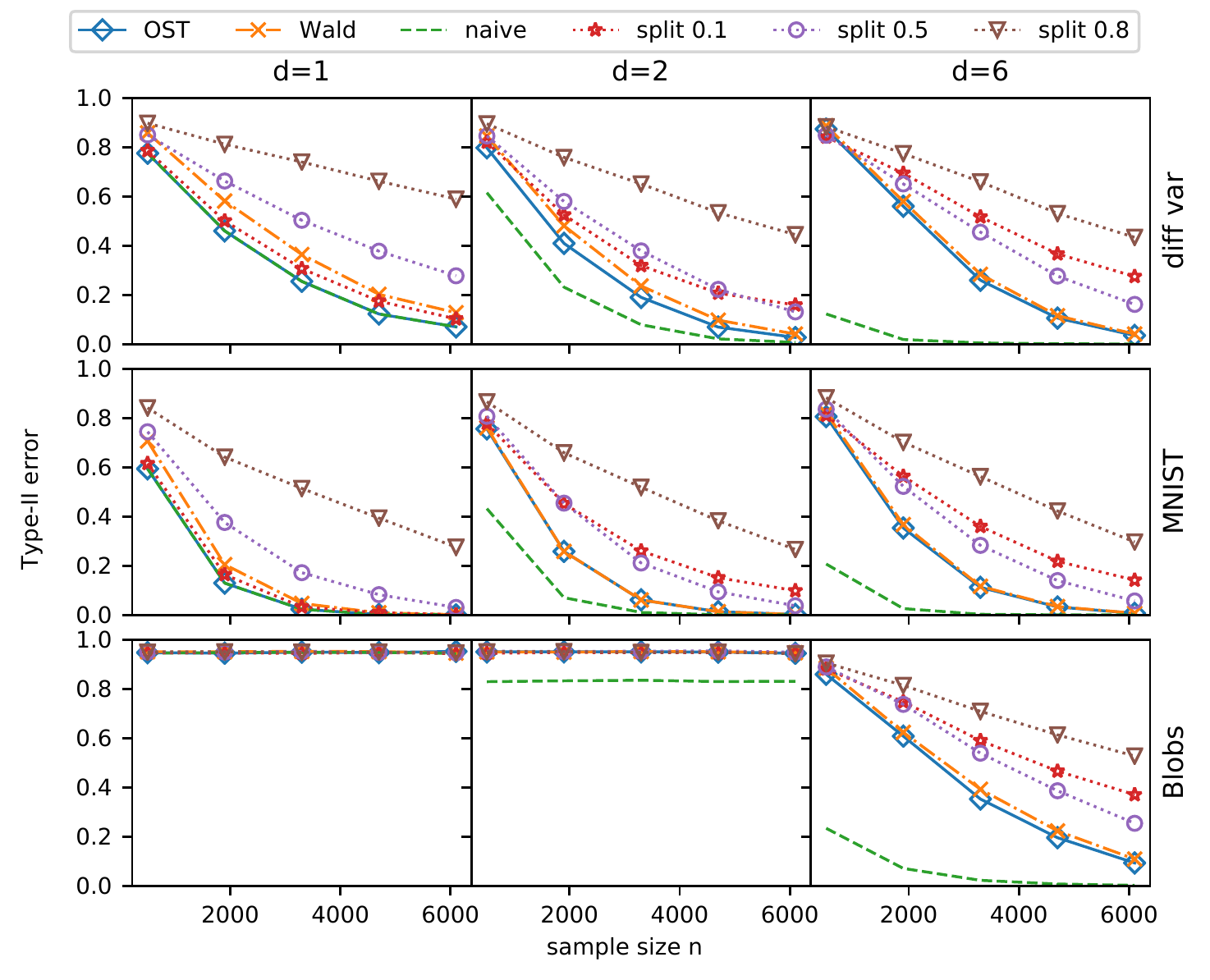}
\caption{Type-II errors obtained from different experiments. The rows (columns) correspond to different datasets (sets of base kernels). For all considered cases, \textsc{\name} outperforms all the (well-calibrated) competing methods, i.e., \textsc{Split} and \textsc{Wald}. 
}
\label{fig:typeII}
\end{figure}
We demonstrate the advantages of {\name} over data-splitting approaches and the Wald test with kernel two-sample testing problems as described in Section \ref{sec:preliminaries}. For an extensive description of the experiments we refer to Appendix \ref{app:Experimental_Details}. We consider three different datasets with different input dimensions $p$.
\begin{inparaenum}
    \item \texttt{DIFF VAR} ($p=1$): $P = \mathcal{N}(0,1)$ and $Q=\mathcal{N}(0,1.5)$. 
    \item \texttt{MNIST} ($p=49$): We consider downsampled 7x7 images of the MNIST dataset \citep{lecun2010mnist}, where $P$ contains all the digits and $Q$ only uneven digits.  
    \item \texttt{Blobs} ($p=2$): A mixture of anisotropic Gaussians where the covariance matrix of the Gaussians have different orientations for $P$ and $Q$.
\end{inparaenum}
We denote 
by
$k_\text{lin}$ the linear kernel, and $k_\sigma$ the Gaussian kernel with bandwidth  $\sigma$. For each dataset we consider three different base sets of kernels $\mathcal{K}$ and choose $\tilde{\sigma}$ with the median heuristic:
\begin{inparaenum}[(a)]
    \item $d=1$: $\mathcal{K}= [k_{\tilde{\sigma}}]$,
    \item $d=2$: $\mathcal{K}= [k_{\tilde{\sigma}}, k_\text{lin}]$,
    \item $d=6$: $\mathcal{K}= [k_{0.25\tilde{\sigma}}, k_{0.5\tilde{\sigma}}, k_{\tilde{\sigma}}, k_{2\tilde{\sigma}}, k_{4\tilde{\sigma}}, k_\text{lin}]$.
\end{inparaenum} 
From the base set of kernels we estimate the base set of test statistics using the linear-time MMD estimates.
We compare four different approaches:
\begin{inparaenum}[i)]
    \item \textsc{\name}, 
    \item \textsc{Wald}, 
    \item \textsc{split}:
    Data splitting similar to the approach in \citet{Gretton2012optimal}, but with the same constraints as {\name}. \textsc{split0.1} denotes that 10\% of the data are used for learning $\bm\beta^*$ and 90\% are used for testing,
    \item \textsc{naive}:
    Similar to splitting but all the data is used for learning and testing without correcting for the dependency. The \textsc{naive} approach is not a well-calibrated test.
\end{inparaenum}
For all the setups we estimate the Type-II error for various sample sizes at a level $\alpha = 0.05$. Error rates are estimated over 5000 independent trials and the results are shown in Figure \ref{fig:typeII}. 
In Appendix \ref{app:type_I}, we also investigate the Type-I error and show that all methods except for \textsc{naive} correctly control the Type-I error at a rate $\alpha$.
Note that all of the methods scale with $\mathcal{O}(n)$ and the difference in computational cost are negligible.

The experimental results in Figure \ref{fig:typeII} support the main claims of this paper. First, comparing $\textsc{\name}$ with \textsc{split}, we conclude that using all the data in an integrated approach is always better (or equally good) than any data splitting approach. Second, comparing $\textsc{\name}$ to $\textsc{Wald}$, we conclude that adding a priori information ($\bm\mu\geq \bm 0$) to reduce the class of considered tests in a sensible way leads to higher (or equally high) test power. 
Another interesting observation is in the results of the data-splitting approach. Looking at the \textsc{diff var} experiment, in the leftmost plot, we can see that the errors are 
monotonically increasing with
the portion of data used to select the test. Since there is only one test, the more data we use to select the test, the higher the error (less data remains for testing). 
In the middle plot, selection becomes important. Hence, we can see that the gap
in
performance between all data-splitting approach reduces. However, the order is still consistent with the previous plot.
Interestingly, in the rightmost plot, learning becomes even more important. Now, the order changes. If we use too 
little
data for learning the test (\textsc{split0.1}), the error is high. 
However, if we use too much data for learning the test (\textsc{split0.8}), the error will be high 
as well.
That is, there is a trade-off in how much data one should use for selecting the test, and 
for
conducting the test. 
The optimal proportion
depends on the problem and can thus in general not be determined a priori.  

In the Appendix \ref{app:discrete} we also compare $\tau_\text{base}$ to a selection of a base test via the data-splitting approach. Here, \textsc{split0.1} consistently performs better than the other split approaches, which is plausible, since the class of considered tests $T_\text{base}$ is quite small. \textsc{Split0.1} can even be better than $\tau_\text{base}$, see discussion in Appendix \ref{app:discrete}.

In Figure \ref{fig:laplaceII}, we additionally consider a constructed $1$-D dataset where the distributions share the first three moments and all uneven moments vanish (Figure \ref{fig:plot_uniform} in the appendix). We compare the results for different sets of $d \in [5]$ base kernels $\mathcal{K} = [k_\text{pol}^1, \dots, k_\text{pol}^d]$, where $k_\text{pol}^u(x,y) = (x\cdot y)^u$ denotes the homogeneous polynomial kernel of order $u$. 
By construction, $k_\text{pol}^u$ does not contain any information about the difference of $P$ and $Q$, for $u\neq 4$.
Thus, for $d\leq 3$ the well-calibrated methods have a Type-II error of $1-\alpha$. 
Only the \textsc{naive} approach already overfits to the noise. 
Adding the fourth order polynomial adds helpful information and all the methods improve performance. However, adding the fifth order, which again only contains noise, leads to an increased error rate. 
We interpret this as bias-variance tradeoff that should be considered in the construction of the base set $\mathcal{K}$.

In Appendix \ref{app:constraints} we compare how the constraints $\bm\beta \geq \bm 0$, as suggested in \citet{Gretton2012optimal}, work in comparison to the {\name}  approach. We find that while the constraints $\Sigma \bm\beta \geq \bm 0$ lead to consistently higher power than the Wald test, the simple positivity constraints can lead to both, better or worse power depending on the problem. 
We thus recommend using the 
{\name}.

\begin{figure}[t]
    \centering
{\begin{minipage}[t]{.47\linewidth}
\begin{algorithm}[H]
\caption{One-Sided Test (\name)}
  \begin{algorithmic}
      \INPUT $\Sigma$, $\hat{\bm\tau} = \sqrt{n} \widehat{\text{MMD}}^2(P,Q)$, $\alpha$
      \STATE $\hat{\bm\tau} = \Sigma^{-1}\hat{\bm\tau}$ \COMMENT{Apply Remark \ref{rmk:generalization}}
      \STATE $\Sigma = \Sigma^{-1}$ \COMMENT{Apply Remark \ref{rmk:generalization}}
      \STATE $\bm\beta^* = {\argmax}_{\|\bm\beta\|  =1 ,\bm\beta \geq \bm 0} \frac{\bm{\beta}^\top \hat{\bm\tau}}{(\bm{\beta}^\top\Sigma \bm{\beta})^\frac{1}{2}}$
      \STATE $\mathcal{U} = \{u|u \in [d], \beta^*_u >0\}$
      \STATE $\hat{\bm z} = \hat{\bm \tau} - \Sigma\bm\beta^* \frac{\bm\beta^{*\top} \hat{\bm\tau}}{\bm{\beta}^{*\top}\Sigma \bm{\beta^*}}$
      \STATE $ l = |\mathcal{U}|$
      \IF{$ l \geq 2$}
      \STATE $t_\alpha = \Phi^{-1}_{\chi_l}(1-\alpha)$
      \ENDIF
      \IF{$l = 1$}
      \STATE $\mathcal{V}^-
            =\max_{u\notin \mathcal{U}} \frac{\hat{z}_u (\bm{\beta}^{*\top}\Sigma \bm{\beta}^*)^\frac{1}{2}}{\Sigma_{uu}^\frac{1}{2}(\bm{\beta}^{*\top}\Sigma \bm{\beta}^*)^\frac{1}{2} - (\Sigma \bm\beta^*)_u}$
      \STATE $t_\alpha =  \Phi^{-1} \left((1-\alpha)(1-\Phi(\mathcal{V}^-)) + \Phi(\mathcal{V}^-)  \right)$    
      \ENDIF
      \IF{$t_\alpha < \frac{\bm\beta^{*\top}\hat{\bm\tau}}{(\bm{\beta}^{*\top}\Sigma \bm{\beta}^*)^\frac{1}{2}}$}
      \STATE Reject $H_0$
      \ENDIF
  \end{algorithmic}\label{alg:continuous}
\end{algorithm}
\end{minipage}
}
\hfill
\begin{minipage}[t]{.47\linewidth}
\begin{figure}[H]
\centering
    \includegraphics[width=.99\textwidth]{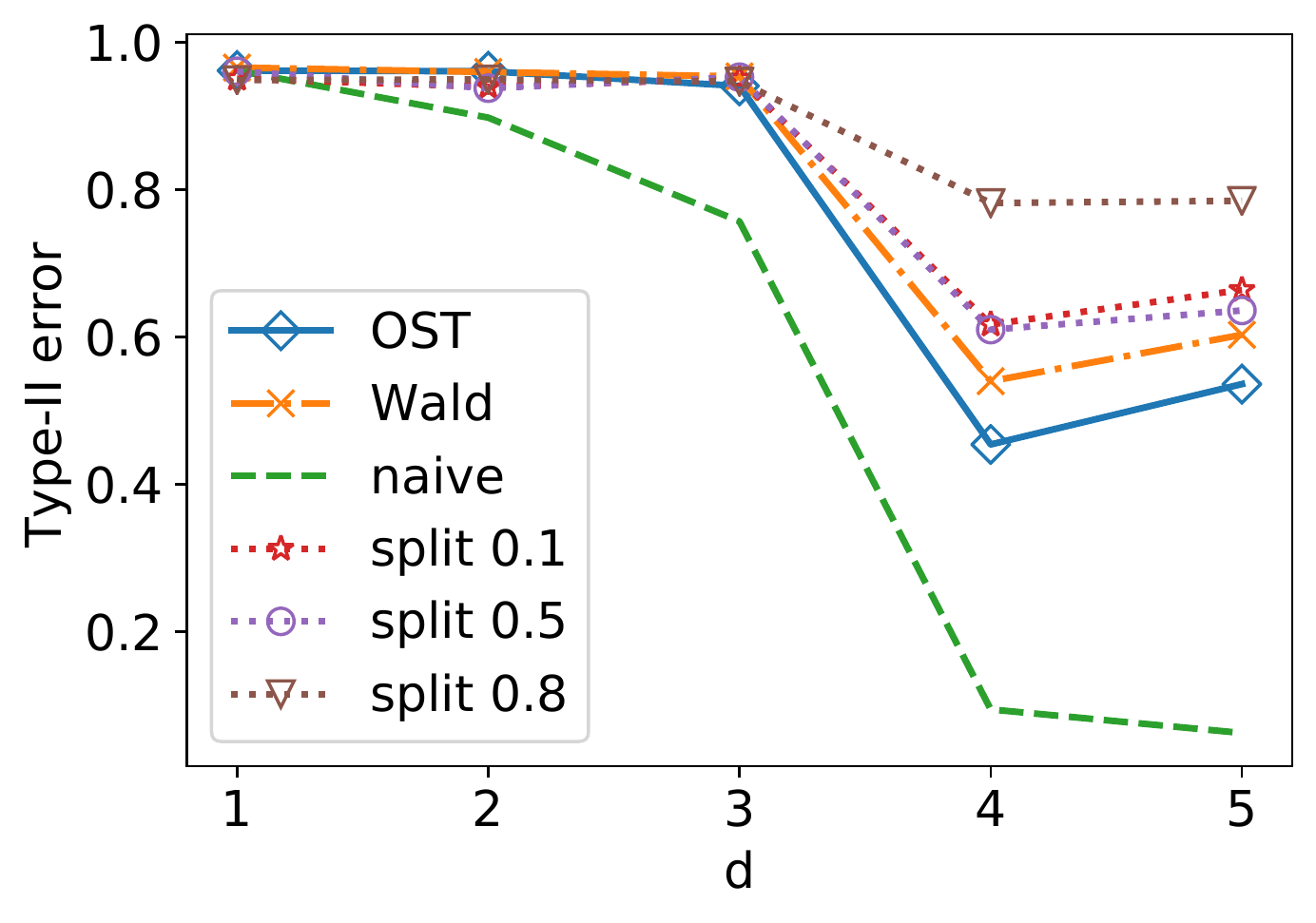}
    \caption{Type-II errors when the first $d$ polynomial kernels are used for a two-sample test with symmetric distributions with the equal covariance (Figure \ref{fig:plot_uniform} in the appendix). {\name} outperforms all the (well-calibrated) competitors. 
    }
    \label{fig:laplaceII}
\end{figure}

\end{minipage}
\end{figure}

\section{Conclusion}
Previous work used data splitting to exclude dependencies when optimizing a hypothesis test. 
This work is the first step towards using all the data for learning and testing. Our approach uses asymptotic joint normality of a predefined set of test statistics to derive the conditional null distributions in closed form.  
We investigated the example of kernel two-sample tests, where we use linear-time MMD estimates of multiple kernels as a base set of test statistics.
We experimentally verified that an integrated approach outperforms the existing data-splitting approach of \citet{Gretton2012optimal}. Thus data splitting, although theoretically easy to justify, does not efficiently use the data. 
Further, we experimentally showed that a one-sided test ({\name}), using prior information about the alternative hypothesis, leads to an increase in test power compared to the more general Wald test. 
Since the estimates of the base test statistics are linear in the sample size and the null distributions are derived analytically, the whole procedure is computationally cheap. 
However, it is an open question  whether and how this work can be generalized to problems where the class of candidate tests is not directly constructed from a base set of jointly normal test statistics. 

\section*{Broader impact}
 Hypothesis testing and valid inference after model selection are fundamental problems in statistics, which have recently attracted increasing attention also in machine learning. Kernel tests such as MMD are not only used for statistical testing, but also to design algorithms for deep learning and GANs \citep{LiSweZem2015,LiChaCheYan2017}. 
 The question of how to select the test statistic naturally arises in kernel-based tests because of the kernel choice problem. Our work shows that it is possible to overcome the need of (wasteful and often heuristic) data splitting when designing hypothesis tests with feasible null distribution. Since this comes without relevant increase in computational resources we expect the proposed method to replace the data splitting approach in applications that fit the framework considered in this work. Theorem \ref{thm:continuous} is also applicable beyond hypothesis testing and extends the previously known PSI framework proposed by \citet{Lee2016}. 
 
\begin{ack}
    The authors thank Arthur Gretton, Will Fithian, and Kenji Fukumizu for helpful discussion. JMK thanks Simon Buchholz for helpful discussions and pointing out a simplification of Lemma \ref{lemma:conditions_optimum}.
\end{ack}

\bibliography{refs}
\bibliographystyle{unsrtnat}

\newpage
\appendix


\section{Proof of Theorem \ref{thm:continuous}}\label{app:proof_continuous}

In this section we prove the main theorem. 
The outline of the proof is as follows: We first characterize the "selection event", i.e., we characterize under which conditions each active set $\mathcal{U}$ is selected. This is done with Lemmas \ref{lemma:conditions_optimum} and \ref{lemma:conditions_in_terms_of_z}. For the case $l=1$ we then show that the PSI framework of \citet{Lee2016} can be applied and we recover the result of Corollary \ref{cor:selection_discrete}. It is not surprising, that for the case $l=1$ the PSI framework works, since $\mathcal{U}$ corresponds to a single fixed $\bm\beta^*$ and the probability of selecting it is greater than $0$. For the case $l\geq 2$, we show, that the considered test statistic essentially takes the same form as the Wald test but only on the active dimensions. Thus it follows a $\chi_l$ distribution. This distribution does not change even if we explicitly condition on the selection of $\mathcal{U}$. This is because the randomness that determines which active set is selected is independent of the value of the selected test statistic. Before we start with the proof we collect some notation we introduce for the proof.
\textbf{Notation:}
\begin{itemize}
\item The objective of the optimization $f(\bm\beta) := \frac{\bm{\beta}^\top {\bm\tau}}{(\bm{\beta}^\top\Sigma \bm{\beta})^\frac{1}{2}}$.
    \item Projector onto the active subspace (leaving the dependency on $\mathcal{U}$ implicit):
\begin{align*}
    \proj := \sum_{u\in \mathcal{U}} e_u e_u^\top,
\end{align*}
where $e_u$ denotes the $u$-th Cartesian unit vector in $\mathbb{R}^d$.

\item $\bm z := \left({\mathop{I}}_d - \frac{ \Sigma\bm\beta^* \bm\beta^{*\top} }{\bm{\beta}^{*\top}\Sigma \bm{\beta^*}}\right) \bm \tau = \bm \tau - \Sigma\bm\beta^* \frac{\bm\beta^{*\top} \bm\tau}{\bm{\beta}^{*\top}\Sigma \bm{\beta^*}}$.
\item $\pseudo$ denotes the pseudoinverse of $\proj\Sigma\proj$.
\end{itemize}

As a first step, we need to characterize which values of $\bm\tau$ correspond to which active set $\mathcal{U}$. This is done with Lemma \ref{lemma:conditions_optimum}, which we prove separately in \ref{app:proof_lemma_conditions}. 
\begin{lemma}\label{lemma:conditions_optimum}
    Let $\mathcal{U} := \{u \,|\, \beta^*_u \neq 0\}$. Then, 
    \begin{align*}
        \bm \beta^* = \argmax_{\|\bm\beta\|  =1 ,\bm\beta \geq \bm 0} \frac{\bm{\beta}^\top {\bm\tau}}{(\bm{\beta}^\top\Sigma \bm{\beta})^\frac{1}{2}}
    \end{align*}
    if and only if all of the following conditions hold:
    \begin{enumerate}
        \item $\left. \frac{\partial}{\partial\beta_u} \frac{\bm{\beta}^\top {\bm\tau}}{(\bm{\beta}^\top\Sigma \bm{\beta})^\frac{1}{2}} \right|_{\bm\beta=\bm\beta^*} \begin{cases} \leq 0 \text{ if } u \notin \mathcal{U}\quad (a), \\
        = 0 \text{ if } u \in \mathcal{U} \quad (b),
        \end{cases}$
        \item $ \frac{\bm{\beta}^{*\top} {\bm\tau}}{(\bm{\beta}^{*\top}\Sigma \bm{\beta}^*)^\frac{1}{2}} \geq \frac{\tau_u }{\sqrt{\Sigma_{uu}}}, \qquad \forall u \notin \mathcal{U}$,
        \item $\bm\beta^*_u
        = 0 \quad \forall u \notin \mathcal{U} \quad(a),$\\
        $\bm\beta^*_u > 0 \quad \forall u \in \mathcal{U} \quad(b)$,\\
        $\|\bm\beta^*\|  =1 $ \quad (c).
    \end{enumerate}
\end{lemma}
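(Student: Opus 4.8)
The plan is to treat this as a standard KKT characterization of a constrained maximum, exploiting two structural facts: the objective $f(\bm\beta) = \bm\beta^\top\bm\tau/(\bm\beta^\top\Sigma\bm\beta)^{\frac12}$ is homogeneous of degree zero in $\bm\beta$, and (as established in Appendix \ref{sec:optimization}) the maximization \eqref{eq:objective} is equivalent to a convex program, so that first-order conditions are both necessary \emph{and} sufficient for global optimality. The key identity driving the whole argument is Euler's relation for a degree-zero homogeneous function, $\bm\beta^\top\nabla f(\bm\beta) = 0$, which I would verify directly from the explicit gradient $\partial f/\partial\beta_u = s^{-1}\tau_u - s^{-3}(\bm\beta^\top\bm\tau)(\Sigma\bm\beta)_u$ with $s := (\bm\beta^\top\Sigma\bm\beta)^{\frac12}$; this is what will let me eliminate the multiplier attached to the normalization constraint $\|\bm\beta\|=1$.

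For necessity (optimality $\Rightarrow$ conditions (1)--(3)), I would first note that feasibility of $\bm\beta^*$ is exactly condition (3). Writing the Lagrangian with multipliers $\lambda_u\geq 0$ for the constraints $-\beta_u\leq 0$ and $\nu$ for $\|\bm\beta\|^2=1$, stationarity reads $\nabla f(\bm\beta^*) + \sum_u \lambda_u e_u = 2\nu\bm\beta^*$. Taking the inner product with $\bm\beta^*$ and invoking Euler's relation together with complementary slackness $\lambda_u\beta_u^*=0$ forces $\nu=0$, so stationarity collapses to $\partial f/\partial\beta_u = -\lambda_u$, which is $\leq 0$ on inactive coordinates ($u\notin\mathcal{U}$, where $\lambda_u\geq 0$) and $=0$ on active ones ($u\in\mathcal{U}$, where $\lambda_u=0$); this is precisely condition (1). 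Condition (2) is then immediate, since each unit vector $e_u$ is feasible and global optimality gives $f(\bm\beta^*)\geq f(e_u) = \tau_u/\sqrt{\Sigma_{uu}}$.

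For sufficiency (conditions (1)--(3) $\Rightarrow$ optimality), I would show that the stated conditions are exactly the KKT conditions of the equivalent convex program, so that convexity upgrades them into a certificate of \emph{global} optimality. Concretely, reading condition (1) through the explicit gradient recovers the convex-program stationarity $\tau_u \leq 2\gamma(\Sigma\bm\beta^*)_u$ with equality on $\mathcal{U}$, where the multiplier $\gamma$ is proportional to the optimal value $f(\bm\beta^*)/s$; condition (3) supplies primal feasibility and complementary slackness, while condition (2) pins down the value-maximizing (rather than value-minimizing) branch and rules out that some boundary vertex $e_u$ with $u\notin\mathcal{U}$ dominates $\bm\beta^*$.

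The main obstacle I anticipate is that $f$ is a ratio and is \emph{not} concave on the nonnegative orthant, so stationarity alone cannot certify a global maximum; the argument therefore genuinely depends on the convex reformulation of Appendix \ref{sec:optimization} and on correctly handling the degree-zero homogeneity, which is what makes the normalization multiplier $\nu$ vanish and keeps the final conditions clean. A secondary delicate point is the bookkeeping between the Euclidean normalization $\|\bm\beta\|=1$ used in the statement and the $\Sigma$-normalization natural to the convex program, together with the sign/degenerate case in which the optimal value is non-positive, which is exactly where condition (2) earns its place.
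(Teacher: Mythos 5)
Your necessity argument is sound: Euler's relation for the degree-zero homogeneous objective does force the multiplier of the sphere constraint to vanish, LICQ holds at any feasible $\bm\beta^*\neq\bm 0$, and comparing against the feasible vertices $e_u$ gives Condition 2. The genuine gap is in sufficiency, and it sits exactly where you flag a ``delicate point'' without supplying an argument: the case $\tau_u<0$ for all $u$. The convex reformulation of Appendix \ref{sec:optimization} simply does not exist there --- it proceeds by squaring $f$ on the set $S=\{\bm\beta\geq\bm 0,\ f(\bm\beta)\geq 0\}$, which is empty (up to $\bm\beta=\bm 0$) when all $\tau_u<0$, so there is no convex program whose KKT conditions you can invoke. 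Worse, Appendix \ref{sec:optimization} handles the all-negative case by citing \eqref{eq:positive_if_l>2}, a fact the paper establishes \emph{inside} the proof of this very lemma, so leaning on that appendix for the equivalence is circular. In this regime Condition 2 must do real work, and ruling out that some vertex $e_u$ with $u\notin\mathcal{U}$ dominates $\bm\beta^*$ is not enough: you must dominate \emph{every} feasible competitor. The paper closes this with a direct estimate your plan lacks: for any $\bm\xi\geq\bm 0$, writing $u^*=\argmax_u \tau_u/\sqrt{\Sigma_{uu}}$ and using $\tau_{u^*}<0$ together with the triangle inequality $\sum_u \xi_u\,(e_u^\top\Sigma e_u)^{\frac{1}{2}} \geq (\bm\xi^\top\Sigma\bm\xi)^{\frac{1}{2}}$, one gets $f(\bm\xi)\leq \tau_{u^*}/\sqrt{\Sigma_{u^*u^*}}\leq f(\bm\beta^*)$, the last step being where Condition 2 enters; this also yields \eqref{eq:positive_if_l>2} as a by-product.

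Even in the benign case $\max_u\tau_u\geq 0$, the claim that ``convexity upgrades KKT into a global certificate'' needs more than you state: $f$ is not concave, and the squared objective $g_1/g_2$ is a ratio of two \emph{convex} quadratics, so mere quasiconcavity would not make KKT points global maxima. You would need to argue pseudoconcavity of $f$ on $\{\bm\beta\geq\bm 0,\ \bm\beta^\top\bm\tau>0\}$ (a positive linear numerator over a positive convex denominator), or carry out the parametric bridge through the fixed-$a$ quadratic program explicitly, and separately observe that any $\bm\xi$ with $\bm\xi^\top\bm\tau\leq 0$ is trivially dominated. The paper avoids all of this machinery: its case (i) shows that if a feasible $\bm\xi$ strictly beats $\bm\beta^*$ while $\bm\beta^{*\top}\bm\tau\geq 0$, then the directional derivative $\bm\xi^\top\nabla f(\bm\beta^*)$ is strictly positive by Cauchy--Schwarz, contradicting Condition 1 directly --- no constraint qualification, duality, or pseudoconcavity required. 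So your route is repairable (make the pseudoconcavity explicit, and add a self-contained vertex-domination argument for the all-negative case), but as written the sufficiency half would fail precisely where Condition 2 is needed.
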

Intuitively, Condition1(b) ensures that $\bm\beta^*$ is a local maximum of the objective function for the active dimensions.
Condition 1(a) ensures that if $u \notin \mathcal{U}$,
increasing $\beta^*_u$ does not improve the SNR.
Condition 2 is harder to
interpret, but is needed in cases where all entries of $\bm\tau$ are
negative.
Condition 3 enforces that $\bm\beta^*$ lies in the feasible set of \eqref{eq:objective}.

Note that $\bm\beta^{*\top}\bm\tau$ is essentially a one-dimensional RV.
    We define another random variable 
    \begin{align}\label{eq:definition_z_cont}
        \bm z := \left({\mathop{I}}_d - \frac{ \Sigma\bm\beta^* \bm\beta^{*\top} }{\bm{\beta}^{*\top}\Sigma \bm{\beta^*}}\right) \bm \tau = \bm \tau - \Sigma\bm\beta^* \frac{\bm\beta^{*\top} \bm\tau}{\bm{\beta}^{*\top}\Sigma \bm{\beta^*}}.
    \end{align}

    In Appendix \ref{app:gradient}, we show that $\bm z$ is closely related to the partial derivatives of the objective function and we have
    \begin{align}\label{eq:partial_in_term_of_z}
        \left. \frac{\partial}{\partial\beta_u} \frac{\bm{\beta}^\top {\bm\tau}}{(\bm{\beta}^\top\Sigma \bm{\beta})^\frac{1}{2}} \right|_{\bm\beta=\bm\beta^*} = \frac{\bm z}{\left(\bm{\beta}^{*\top}\Sigma \bm{\beta^*}\right)^\frac{1}{2}}.
    \end{align}
    We can then rewrite the conditions of Lemma \ref{lemma:conditions_optimum} as follows.
    \begin{lemma}\label{lemma:conditions_in_terms_of_z}
        The conditions of Lemma \ref{lemma:conditions_optimum} are equivalent to
        \begin{enumerate}
        \item $\begin{cases}
        z_u \leq 0 \qquad \forall u \notin \mathcal{U} \quad & (a),\\
        z_u = 0 \qquad \forall u \in \mathcal{U} & (b),
        \end{cases}$
        \item $\frac{\bm{\beta}^{*\top} {\bm\tau}}{(\bm{\beta}^{*\top}\Sigma \bm{\beta}^*)^\frac{1}{2}} \geq \mathcal{V}^-(\bm z)$, with
        
            $\mathcal{V}^-(\bm z)
            :=\max_{u\notin\mathcal{U}} \frac{z_u (\bm{\beta}^{*\top}\Sigma \bm{\beta}^*)^\frac{1}{2}}{\Sigma_{uu}^\frac{1}{2}(\bm{\beta}^{*\top}\Sigma \bm{\beta}^*)^\frac{1}{2} - (\Sigma \bm\beta^*)_u},$
        \item $\bm\beta^*_u
        = 0 \quad \forall u \notin \mathcal{U} \quad(a),$\\
        $\bm\beta^*_u > 0 \quad \forall u \in \mathcal{U} \quad(b)$,\\
        $\|\bm\beta^*\|  =1 $ \quad (c).
    \end{enumerate}
    \end{lemma}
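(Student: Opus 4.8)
The plan is to prove the two condition sets equivalent one block at a time. Condition~3 is literally identical in Lemmas~\ref{lemma:conditions_optimum} and~\ref{lemma:conditions_in_terms_of_z}, so nothing is required there, and I would reduce the statement to translating Conditions~1 and~2. The two ingredients I would lean on are the identity \eqref{eq:partial_in_term_of_z}, which expresses the partial derivatives of $f$ at $\bm\beta^*$ through $\bm z$, and the defining identity \eqref{eq:definition_z_cont} of $\bm z$. For Condition~1 I would simply read off \eqref{eq:partial_in_term_of_z} componentwise, $\left.\frac{\partial}{\partial\beta_u} f\right|_{\bm\beta^*} = z_u/(\bm\beta^{*\top}\Sigma\bm\beta^*)^{1/2}$, and note that the scalar $(\bm\beta^{*\top}\Sigma\bm\beta^*)^{1/2}$ is strictly positive because $\Sigma$ is positive definite and $\bm\beta^*$ has unit norm. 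Dividing by a positive factor preserves signs, so $\left.\frac{\partial}{\partial\beta_u} f\right|_{\bm\beta^*}\leq 0 \iff z_u\leq 0$ and $=0 \iff z_u=0$; this turns Conditions~1(a) and~1(b) of Lemma~\ref{lemma:conditions_optimum} into those of Lemma~\ref{lemma:conditions_in_terms_of_z} directly.

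For Condition~2 I would write \eqref{eq:definition_z_cont} componentwise for $u\notin\mathcal{U}$. Abbreviating $S := \bm\beta^{*\top}\bm\tau/(\bm\beta^{*\top}\Sigma\bm\beta^*)^{1/2}$ (the value of the statistic) and $D := (\bm\beta^{*\top}\Sigma\bm\beta^*)^{1/2}$, the definition of $\bm z$ gives $\tau_u = z_u + (\Sigma\bm\beta^*)_u\, S/D$. Substituting this into the inequality $S \geq \tau_u/\sqrt{\Sigma_{uu}}$ of Lemma~\ref{lemma:conditions_optimum}, multiplying through by $\sqrt{\Sigma_{uu}}>0$, and collecting the terms proportional to $S$, I would arrive at $S\,\delta_u/D \geq z_u$, where $\delta_u := \sqrt{\Sigma_{uu}}\,D - (\Sigma\bm\beta^*)_u$ is exactly the denominator appearing in $\mathcal{V}^-$.

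The one step that needs genuine care — and the only real obstacle — is showing $\delta_u>0$ for every $u\notin\mathcal{U}$, since this is what makes $\mathcal{V}^-$ well defined and lets me divide by $\delta_u/D$ without reversing the inequality. I would obtain it from Cauchy--Schwarz: $(\Sigma\bm\beta^*)_u = \langle \Sigma^{1/2} e_u, \Sigma^{1/2}\bm\beta^*\rangle \leq \|\Sigma^{1/2} e_u\|\,\|\Sigma^{1/2}\bm\beta^*\| = \sqrt{\Sigma_{uu}}\,D$, with equality only if $e_u$ and $\bm\beta^*$ are parallel; since $\beta^*_u=0$ while $\bm\beta^*\neq\bm 0$ for $u\notin\mathcal{U}$, parallelism is impossible, the inequality is strict, and hence $\delta_u>0$. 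Dividing $S\,\delta_u/D\geq z_u$ by the positive number $\delta_u/D$ then yields $S \geq z_u D/\delta_u$. Because every manipulation (substituting an identity, multiplying and dividing by strictly positive quantities) is reversible, requiring this for all $u\notin\mathcal{U}$ is equivalent to $S \geq \max_{u\notin\mathcal{U}} z_u D/\delta_u = \mathcal{V}^-(\bm z)$, which is Condition~2 of Lemma~\ref{lemma:conditions_in_terms_of_z}. Together with the verbatim Condition~3, this closes the equivalence.
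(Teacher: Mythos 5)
Your proof is correct and follows essentially the same route as the paper's: Condition 1 is read off componentwise from \eqref{eq:partial_in_term_of_z}, and Condition 2 is obtained by substituting the definition \eqref{eq:definition_z_cont} of $\bm z$ into $\frac{\bm{\beta}^{*\top}\bm\tau}{(\bm{\beta}^{*\top}\Sigma\bm{\beta}^*)^{1/2}} \geq \tau_u/\sqrt{\Sigma_{uu}}$ and then dividing by the strictly positive quantity $\Sigma_{uu}^{1/2}(\bm{\beta}^{*\top}\Sigma\bm{\beta}^*)^{1/2} - (\Sigma\bm\beta^*)_u$, exactly as in Appendix \ref{app:proof_continuous}. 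Your explicit Cauchy--Schwarz justification of that positivity (strictness because $\beta^*_u = 0$ for $u\notin\mathcal{U}$ rules out parallelism of $e_u$ and $\bm\beta^*$) is in fact a slightly more careful rendering of the paper's terse remark that ``$\Sigma$ is positive and we only consider $u$ such that $e_u \neq \bm\beta^*$.''
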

    \begin{proof}[Proof of Lemma \ref{lemma:conditions_in_terms_of_z}]
        Condition 1 directly follows from \eqref{eq:partial_in_term_of_z}.   The second condition follows by inserting the definition of $\bm z$
        \begin{align*}
            &\frac{\bm{\beta}^{*\top} {\bm\tau}}{(\bm{\beta}^{*\top}\Sigma \bm{\beta}^*)^\frac{1}{2}} \geq \frac{\tau_u }{\sqrt{\Sigma_{uu}}}\\
            \Leftrightarrow & \frac{\bm{\beta}^{*\top} {\bm\tau}}{(\bm{\beta}^{*\top}\Sigma \bm{\beta}^*)^\frac{1}{2}} \geq \frac{z_u }{\sqrt{\Sigma_{uu}}} +  e_u^\top\Sigma\bm\beta^* \frac{\bm\beta^{*\top} \bm\tau}{\bm{\beta}^{*\top}\Sigma \bm{\beta^*} \sqrt{\Sigma_{uu}}}\\
            \Leftrightarrow & \frac{\bm{\beta}^{*\top} {\bm\tau}}{(\bm{\beta}^{*\top}\Sigma \bm{\beta}^*)^\frac{1}{2}} \left(1 -  \frac{e_u^\top\Sigma\bm\beta^*}{(\bm{\beta}^{*\top}\Sigma \bm{\beta^*})^\frac{1}{2} \sqrt{\Sigma_{uu}}}\right) \geq \frac{z_u }{\sqrt{\Sigma_{uu}}}\\
             \Leftrightarrow & \frac{\bm{\beta}^{*\top} {\bm\tau}}{(\bm{\beta}^{*\top}\Sigma \bm{\beta}^*)^\frac{1}{2}} \left((\bm{\beta}^{*\top}\Sigma \bm{\beta^*})^\frac{1}{2} \sqrt{\Sigma_{uu}} -  e_u^\top\Sigma\bm\beta^*\right) \geq z_u (\bm{\beta}^{*\top}\Sigma \bm{\beta^*})^\frac{1}{2}\\
             \Leftrightarrow & \frac{\bm{\beta}^{*\top} {\bm\tau}}{(\bm{\beta}^{*\top}\Sigma \bm{\beta}^*)^\frac{1}{2}}  \geq \frac{z_u (\bm{\beta}^{*\top}\Sigma \bm{\beta^*})^\frac{1}{2}}{\left((\bm{\beta}^{*\top}\Sigma \bm{\beta^*})^\frac{1}{2} \sqrt{\Sigma_{uu}} -  e_u^\top\Sigma\bm\beta^*\right)},
        \end{align*}
        where we used $\Sigma_{uu}^\frac{1}{2}(\bm{\beta}^{*\top}\Sigma \bm{\beta}^*)^\frac{1}{2} - (\Sigma \bm\beta^*)_u > 0$, which holds since $\Sigma$ is positive and we only consider $u$ such that $e_u \neq \bm\beta^*$.
    \end{proof}
    Note that $\mathcal{V}^-(\bm z)$ is always non-positive by Condition 1 and the positivity of $\Sigma$.
    With the above two lemmas we are able to prove Theorem \ref{thm:continuous}.
    
    \begin{proof}[Proof of Theorem \ref{thm:continuous}] 
    
    We prove the two cases $l=1$ and $l\geq 2$ separately.
    
1.): Let $u^* \in [d]$ such that $\mathcal{U} = \{u^*\}$.
    In this case, by Condition 3, $\bm\beta^* = e_{u^*}$.
    We shall now see how Lemma \ref{lemma:conditions_in_terms_of_z} constrains the distribution of $\tau_{u^*}$. 
    For Condition 1(b), we have $z_{u^*} = 0$ by the definition of $\bm z$. 
    So there only remain the constraints 1(a) and 2. Using the definition \eqref{eq:definition_z_cont} of $\bm z$, we can rewrite 1(a) as
    \begin{align*}
       \left(\left({\mathop{I}}_d - \Sigma e_{u^*} \frac{e_{u^*}^\top}{\Sigma_{u^*u^*}}\right) \bm\tau \right)_u \leq 0 \quad \forall u\notin \mathcal{U}
      \Longleftrightarrow 
      A^{[1(b)]} \bm\tau \leq 0,
    \end{align*}
    where $ A^{[1(b)]}$ is the matrix $\left({\mathop{I}}_d - \Sigma e_{u^*} \frac{e_{u^*}^\top}{\Sigma_{u^*u^*}}\right)$ and we used that its $u$-th row contains only zeros.
    Note that Condition 2 is the same as used in Section \ref{sec:discrete}. Thus we can define the matrix $A^{[2]}$ as we do in the proof of Corollary \ref{cor:selection_discrete}. 
    We have now all the remaining constraints as linear inequalities of $\bm\tau$ and thus we can find the conditional distribution by applying Theorem \ref{thm:poly}. 
    Defining $\bm\eta = \frac{e_{u^*}}{(\bm\beta^*T\Sigma\bm\beta^*)^\frac{1}{2}}$ and $\bm{c} := \Sigma \bm\eta \left(\bm{\eta}^\top \Sigma \bm{\eta}\right)^{-1}$, we get $ A^{[1(b)]} \bm c = \bm 0$. Note that whenever $(A\bm c)_j = 0$, the constraint does not change anything in Theorem \ref{thm:poly}. 
    Thus the result follows by using $A = A^{[2]}$ and application of Theorem \ref{thm:poly}. 
    
    An alternative proof can be done by noting that $\bm z$ is independent of $\frac{\bm{\beta}^{*\top} {\bm\tau}}{(\bm{\beta}^{*\top}\Sigma \bm{\beta}^*)^\frac{1}{2}}$ if we consider $\bm\beta^* = e_{u^*}$ as fixed. Thus, the fulfillment of Condition 1b) is independent of $\frac{\bm{\beta}^{*\top} {\bm\tau}}{(\bm{\beta}^{*\top}\Sigma \bm{\beta}^*)^\frac{1}{2}}$. Since the unconditional distribution of $\frac{\bm{\beta}^{*\top} {\bm\tau}}{(\bm{\beta}^{*\top}\Sigma \bm{\beta}^*)^\frac{1}{2}}$ follows a standard normal, adding Condition 2 results in a truncated normal.

    2.) Next, we consider the case $|\mathcal{U}| \geq 2$. Again we will be considering the conditions as stated in Lemma \ref{lemma:conditions_in_terms_of_z}. As we state in \eqref{eq:positive_if_l>2}, we have $\bm\beta^{*\top}\bm\tau \geq 0$ and thus Condition 2 is fulfilled, since $\mathcal{V}^-$ is always non-positive. Thus, we can neglect Condition 2. Our first step will be to find a closed form function $h_\mathcal{U}$ such that $\bm\beta^* = h_\mathcal{U}(\bm\tau)$ (this function will only hold true if $\mathcal{U}$ is actually the active set). Defining the projector onto the active subspace $\proj := \sum_{u\in \mathcal{U}} e_u e_u^\top$, by Condition 3(a) we have $\bm\beta^* = \proj \bm\beta^*$.
    Using \eqref{eq:definition_z_cont}, we can rewrite Condition 1(b) as
    \begin{align}
        &\proj \bm z= \bm 0 
        \,\overset{\eqref{eq:definition_z_cont}}{\Leftrightarrow}\,
        \proj \bm\tau = \proj \Sigma\bm\beta^* \frac{\bm\beta^{*\top} \bm\tau}{\bm{\beta}^{*\top}\Sigma \bm{\beta^*}}
        \,\overset{3(a)}{\Leftrightarrow}\,
        \proj \bm\tau = \proj \Sigma\proj \bm\beta^* \frac{\bm\beta^{*\top} \bm\tau}{\bm{\beta}^{*\top}\Sigma \bm{\beta^*}}\label{eq:system_for_beta_star} .
    \end{align}
    This defines a system of $l$ non-trivial equations and by Condition 3, $\bm\beta^*$ has $l$ free parameters. We define $\pseudo$ as the pseudoinverse of $\proj\Sigma\proj$.\footnote{For intuition, assume WLOG that $\mathcal{U} = \{1,\dots, l\}$. The pseudoinverse is then simply the inverse of the $l\times l$ blockmatrix padded with zeros.} 
    For the pseudoinverse it is easy to show $\pseudo = \proj\pseudo  = \pseudo \proj$. 
    Since $\Sigma$ has full rank, a possible solution of \eqref{eq:system_for_beta_star} necessarily has to be of the form $\bm\beta^* = c \cdot \pseudo \bm\tau$ for some $c \in \mathbb{R}$.
     Plugging this into \eqref{eq:system_for_beta_star}, we get $c = \frac{\bm\beta^{*\top}\Sigma\bm\beta^*}{\bm\beta^{*\top} {\bm \tau}}$. Using \eqref{eq:positive_if_l>2}  we get
    $0 \leq \frac{\bm{\beta}^{*\top} {\bm\tau}}{(\bm{\beta}^{*\top}\Sigma \bm{\beta}^*)^\frac{1}{2}}  = \frac{1}{c}.$
    Hence, $c \geq 0$. Using 
    $\|\bm\beta^*\| = 1$ we get $c =\frac{1}{\| \pseudo\bm \tau\|}$.
    Thus, given that the active set is $\mathcal{U}$, we found a closed-form solution for $\bm\beta^*$ as a function of $\bm \tau$, i.e., 
    \begin{align}\label{eq:deterministic_beta_U}
        {\bm\beta}^* = h_{\mathcal{U}}({\bm \tau}) :=  \frac{\pseudo {\bm \tau}}{\|\pseudo {\bm \tau}\|}.
    \end{align}
    Note that so far we did not use Condition 3(b), so this formula itself does not ensure the positivity of $\bm\beta^*$.
    
   Replacing $\bm\beta^*$ in the definition \eqref{eq:definition_z_cont} of $\bm z$ with its closed form, the constant $c$ cancels, and  we get 
    \begin{align*}
        \bm z = \bm \tau - \Sigma \pseudo \bm\tau.
    \end{align*}
     Note that $\pseudo \proj\Sigma\proj \pseudo= \pseudo$ and $(\Sigma \pseudo)_{uu'} =\delta_{uu'} $ if $u,u'\in \mathcal{U}$. This implies that $z_u = 0$ if $u\in \mathcal{U}$ and thus also $\bm z^\top \pseudo \bm\tau = 0$. 
     
    Let us now define $\tilde{X} := (\pseudo)^{\frac{1}{2}}\bm\tau$, resulting in $\tilde{X}_u = 0$ for all $u \notin \mathcal{U}$.
    Since $\tilde{X}$ and $\bm z$ are both linear transformations of $\bm\tau$ they are jointly normally distributed. In Appendix \ref{app:correlation_X,z} we show that $\tilde{X}$ and $\bm z$ are uncorrelated. This, together with the joint normality, implies that they are independent, i.e.,
    \begin{equation}\label{eq:independence_X.z}
        \tilde{X} \perp \!\!\! \perp \bm z.
    \end{equation}
    Further the non-zero coordinates of $\tilde{X}$ are jointly distributed according to a $l$-dimensional standard normal distribution. Hence, its euclidean norm follows a chi-distribution
    \begin{align}
        \|\tilde{X}\|_2 \sim \chi_l.
    \end{align}
    Let us summarize how we used all the conditions of Lemma \ref{lemma:conditions_in_terms_of_z} and finish the proof.
    We used 1(b), 3(a), and 3(c) to show $\eqref{eq:deterministic_beta_U}$. We thus still need to condition on 1(a), and 3(b). Conditioning on 1(a) can be done using the independence of $\bm z$ and $\tilde{X}$. To condition on 3(b), we rewrite it in terms of $\tilde{X}$, i.e., for all $u\in \mathcal{U}$ we have
    \begin{align*}
        \bm\beta^*_u > 0 \Leftrightarrow \left(\pseudo \bm\tau\right)_u \Leftrightarrow \left((\pseudo)^{\frac{1}{2}} \tilde{X}\right)_u > 0 \Leftrightarrow \left((\pseudo)^{\frac{1}{2}} \frac{\tilde{X}}{\|\tilde{X}\|}\right)_u > 0.
    \end{align*}
    Thus it only depends on the direction of $\tilde{X}$. Since the non-trivial entries of $\tilde{X}$ follow a standard normal, the direction of $\tilde{X}$ is independent of its norm, i.e.,
    \begin{align}\label{eq:independence_of_direction_and_norm}
        \|\tilde{X}\|_2 \perp \!\!\! \perp 
    \frac{\tilde{X}}{\|\tilde{X}\|_2}.
    \end{align}
    In the end we get
    \begin{align*}
        &\left[\frac{\bm\beta^*\bm\tau}{(\bm\beta^*\Sigma\bm\beta^*)^{\frac{1}{2}}} \big| \text{Conditions } 1,2,3\right]
       \overset{\eqref{eq:deterministic_beta_U}}{\rightarrow} \overset{d}{=} \left[\frac{\bm\tau^\top \pseudo \bm\tau}{(\bm\tau\pseudo\bm\tau)^{\frac{1}{2}}} \big| \text{Conditions } 1(a),3(b)\right]\\
        \overset{d}{=} &\left[ \|\tilde{X}\|_2  \big|\begin{cases} z_u \leq 0 \quad \forall u\notin \mathcal{U},\\
        \left((\pseudo)^{\frac{1}{2}} \frac{\tilde{X}}{\|\tilde{X}\|}\right)_u > 0 \quad \forall u \in \mathcal{U}
        \end{cases}\right]
        \underset{\eqref{eq:independence_of_direction_and_norm}}{\overset{\eqref{eq:independence_X.z}}{\rightarrow}}\overset{d}{=} \left[ \|\tilde{X}\|_2\right]
        \overset{d}{=} \chi_l.
    \end{align*} 
    \end{proof}
    
    \begin{figure}[t]
    \centering
  
\centerline{\includegraphics[width=0.5\textwidth]{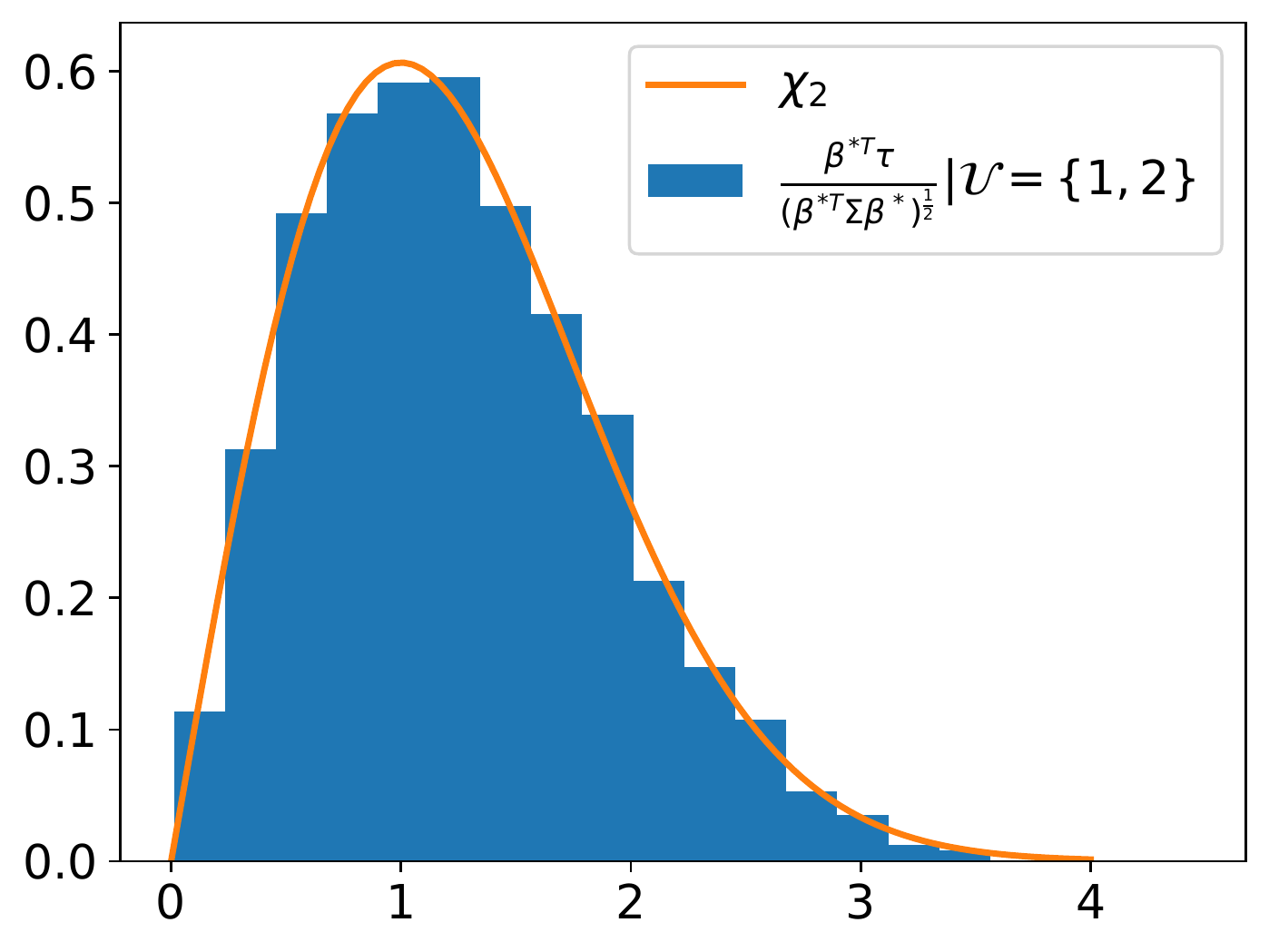}}
\caption{Numerical verification of Theorem \ref{thm:continuous}. For the histogram, we generate a random covariance matrix $\Sigma\in \mathbb{R}^{4\times 4}$ 
and sample $\bm\tau \sim \mathcal{N}(\bm 0, \Sigma)$. We solve \eqref{eq:objective} and only accept the samples for which the active set is $\mathcal{U}=\{1,2\}$. The orange line is the theoretical distribution according to Theorem \ref{thm:continuous}, which is given by a chi distribution with two degrees of freedom. For the specific example the acceptance rate is $P(\mathcal{U}=\{1,2\}) \approx 4\%$.
}
\label{fig:hist}
\end{figure}
    
    \subsection{Proof of Lemma \ref{lemma:conditions_optimum}}\label{app:proof_lemma_conditions}
    \begin{proof}[Proof of Lemma \ref{lemma:conditions_optimum}]
    Since the objective is a homogeneous function of order zero in $\bm\beta$, we can make the proof by considering the optimization without the constraint $\|\bm\beta\| =1$.
    
    The necessity of the conditions is trivial to show. We thus only show the sufficiency. The fourth condition ensures that $\bm\beta^*$ is in the feasible set. For the other conditions, assume there exists $\bm\xi \in \mathbb{R}^d$ such that $\xi_u\geq 0$ for all $u\in [d]$ and $\frac{\bm\xi^\top\bm \tau }{\left(\bm \xi^\top\Sigma \bm \xi\right)^\frac{1}{2}} > \frac{\bm\beta^{*\top}\bm \tau }{\left(\bm \beta^{*\top}\Sigma \bm \beta^*\right)^\frac{1}{2}}$. In the following we show that this implies that at least one of the conditions above is violated, and hence the conditions are sufficient. We separate two cases, $i)$ where ${\bm\beta^*}^\top\bm \tau  \geq 0$, and $ii)$ ${\bm\beta^*}^\top\bm \tau  < 0$.
    \begin{enumerate}
        \item[i)] Assume ${\bm\beta^*}^\top\bm \tau  \geq 0$. We have
            \begin{align*}
                &\bm\xi^\top \nabla_{\bm\beta}\left. \frac{\bm\beta^\top\bm \tau }{\left(\bm \beta^\top\Sigma \bm \beta\right)^\frac{1}{2}} \right|_{\bm\beta=\bm\beta^*}  \\
                &=   \sum_{u \in [d]} \xi_u \left. \frac{\partial}{\partial\beta_u} \frac{\bm\beta^\top\bm \tau }{\left(\bm \beta^\top\Sigma \bm \beta\right)^\frac{1}{2}} \right|_{\bm\beta=\bm\beta^*} \\
                &= \frac{{\bm\xi}^\top \bm \tau }{\left(\bm \beta^{*\top}\Sigma \bm \beta^*\right)^\frac{1}{2}} - \frac{{\bm\beta^*}^\top \bm \tau }{\left(\bm{\beta}^{*\top}\Sigma \bm\beta^*\right)^\frac{3}{2}}  \bm \xi^\top\Sigma {\bm\beta^*}  \\
                &= \frac{{\left(\bm{\xi^\top}\Sigma \bm\xi\right)^\frac{1}{2}}}{{\left(\bm{\beta}^{*\top}\Sigma \bm\beta^*\right)^\frac{1}{2}}} 
                 \left(\frac{\bm\xi^\top\bm \tau }{\left(\bm \xi^\top\Sigma \bm \xi\right)^\frac{1}{2}} - \frac{{\bm\beta^*}^\top \bm \tau }{\left(\bm{\beta}^{*\top}\Sigma \bm\beta^*\right)^\frac{1}{2}}  \frac{\bm \xi^\top\Sigma {\bm\beta^*}}{\left(\bm{\beta}^{*\top}\Sigma \bm\beta^*\right)^\frac{1}{2} \left(\bm \xi^\top\Sigma \bm \xi\right)^\frac{1}{2}}   \right)\\
                &>  \frac{{\left(\bm{\xi^\top}\Sigma \bm\xi\right)^\frac{1}{2}}}{{\left(\bm{\beta}^{*\top}\Sigma \bm\beta^*\right)^\frac{1}{2}}} 
                 \left(\frac{{\bm\beta^*}^\top \bm \tau }{\left(\bm{\beta}^{*\top}\Sigma \bm\beta^*\right)^\frac{1}{2}}  - \frac{{\bm\beta^*}^\top \bm \tau }{\left(\bm{\beta}^{*\top}\Sigma \bm\beta^*\right)^\frac{1}{2}}  \frac{\bm \xi^\top\Sigma {\bm\beta^*}}{\left(\bm{\beta}^{*\top}\Sigma \bm\beta^*\right)^\frac{1}{2} \left(\bm \xi^\top\Sigma \bm \xi\right)^\frac{1}{2}}   \right)\\
                &=\frac{{\left(\bm{\xi^\top}\Sigma \bm\xi\right)^\frac{1}{2}}}{{\left(\bm{\beta}^{*\top}\Sigma \bm\beta^*\right)^\frac{1}{2}}} \frac{{\bm\beta^*}^\top \bm \tau }{\left(\bm{\beta}^{*\top}\Sigma \bm\beta^*\right)^\frac{1}{2}}
                 \left(1  - \frac{\bm \xi^\top\Sigma {\bm\beta^*}}{\left(\bm{\beta}^{*\top}\Sigma \bm\beta^*\right)^\frac{1}{2} \left(\bm \xi^\top\Sigma \bm \xi\right)^\frac{1}{2}}   \right)\\
                &\geq\ 0,
            \end{align*}
            where we used the assumption $\frac{\bm\xi^\top\bm \tau }{\left(\bm \xi^\top\Sigma \bm \xi\right)^\frac{1}{2}} > \frac{\bm\beta^{*\top}\bm \tau }{\left(\bm \beta^{*\top}\Sigma \bm \beta^*\right)^\frac{1}{2}}$ for the first inequality and 
           ${\bm\beta^*}^\top\bm \tau  \geq 0$ and the Cauchy-Schwarz inequality to arrive at the last line. 
            Since, by assumption, $\xi_u \geq 0$ for all $u$, this implies $ \left. \frac{\partial}{\partial\beta_u} \frac{\bm\beta^\top\bm \tau }{\left(\bm \beta^\top\Sigma \bm \beta\right)^\frac{1}{2}} \right|_{\bm\beta=\bm\beta^*} > 0$ for some $u$ and thus is a contradiction to Condition 1.
            
        \item[ii)]  Assume ${\bm\beta^*}^\top\bm \tau < 0$. We define $u^* = \argmax_{u \in [d]} \frac{\tau_{u} }{\left(e_{u}^\top\Sigma e_{u}\right)^\frac{1}{2}}$. By the third condition and the assumption ${\bm\beta^*}^\top\bm \tau < 0$, we have $ 0 > \frac{\bm\beta^{*\top}\bm \tau }{\left(\bm \beta^{*\top}\Sigma \bm \beta^*\right)^\frac{1}{2}} \geq  \frac{\tau_{u^*} }{\left(e_{u^*}^\top\Sigma e_{u^*}\right)^\frac{1}{2}} $. This implies $\tau_{u^*} < 0$. We then get
        \begin{align*}
            \frac{\bm\xi^\top\bm \tau }{\left(\bm \xi^\top\Sigma \bm \xi\right)^\frac{1}{2}} &=  \sum_{u\in[d]}\xi_u\frac{\tau_u }{\left(\bm \xi^\top\Sigma \bm \xi\right)^\frac{1}{2}} = \sum_{u\in[d]}\xi_u\frac{\tau_u \left(e_{u}^\top\Sigma e_{u}\right)^\frac{1}{2}}{\left(\bm \xi^\top\Sigma \bm \xi\right)^\frac{1}{2}\left(e_{u}^\top\Sigma e_{u}\right)^\frac{1}{2}}\\
            &\leq   \sum_{u\in[d]}\xi_u\frac{\tau_{u^*} \left(e_{u}^\top\Sigma e_{u}\right)^\frac{1}{2} }{\left(e_{u^*}^\top\Sigma e_{u^*}\right)^\frac{1}{2}\left(\bm \xi^\top\Sigma \bm \xi\right)^\frac{1}{2}}\\ 
            &= \frac{\tau_{u^*} }{\left(e_{u^*}^\top\Sigma e_{u^*}\right)^\frac{1}{2}}  \frac{\sum_{u\in[d]}\xi_u{\left(e_{u}^\top\Sigma e_{u}\right)^\frac{1}{2}}}{\left(\bm \xi^\top\Sigma \bm \xi\right)^\frac{1}{2}}\\
            & \leq \frac{\tau_{u^*} }{\left(e_{u^*}^\top\Sigma e_{u^*}\right)^\frac{1}{2}} \leq  \frac{\bm\beta^{*\top}\bm \tau }{\left(\bm \beta^{*\top}\Sigma \bm \beta^*\right)^\frac{1}{2}} ,
        \end{align*}
        where to arrive at the last line we used $\tau_{u^*}  < 0$ and  the triangle inequality $\sum_{u\in[d]}\xi_u{\left(e_{u}^\top\Sigma e_{u}\right)^\frac{1}{2}} = \sum_{u\in[d]}\xi_u \|\Sigma^\frac{1}{2}  e_u\| \geq  \|\sum_{u\in[d]} \xi_u \Sigma^\frac{1}{2}  e_u\| =  \| \Sigma^\frac{1}{2} \bm\xi\| = {\left(\bm \xi^\top\Sigma \bm \xi\right)^\frac{1}{2}}$. Thus this violates the assumption $\frac{\bm\xi^\top\bm \tau }{\left(\bm \xi^\top\Sigma \bm \xi\right)^\frac{1}{2}} > \frac{\bm\beta^{*\top}\bm \tau }{\left(\bm \beta^{*\top}\Sigma \bm \beta^*\right)^\frac{1}{2}}$.
        
        Note that the above inequalities also hold for $\frac{\bm\beta^{*\top}\bm \tau }{\left(\bm \beta^{*\top}\Sigma \bm \beta^*\right)^\frac{1}{2}}$. Thus we get that $\frac{\bm\beta^{*\top}\bm \tau }{\left(\bm \beta^{*\top}\Sigma \bm \beta^*\right)^\frac{1}{2}} =  \frac{\tau_{u^*} }{\left(e_{u^*}^\top\Sigma e_{u^*}\right)^\frac{1}{2}}$. This implies that $l=|\mathcal{U}| = 1$. Thus the following statements hold true:
        \begin{align}
            i)& \qquad \bm\beta^{*\top}\bm\tau < 0 \quad \Rightarrow \quad l=1,\\
            ii)& \qquad l\l\geq 2 \,\qquad\quad\Rightarrow \quad  \bm\beta^{*\top}\bm\tau \geq 0. \label{eq:positive_if_l>2}
        \end{align}
    \end{enumerate}
\end{proof}

\subsection{Gradient of objective}\label{app:gradient}
We overload the notation and define $\bm z := \bm \tau - \Sigma\bm\beta \frac{\bm\beta^\top \bm\tau}{\bm{\beta}^\top\Sigma \bm{\beta}}$ similar as in \eqref{eq:definition_z_cont} but for any $\bm\beta$. Then
\begin{align*}
    \nabla_{\bm{\beta}} f(\bm\beta)
    &= \nabla_{\bm{\beta}}\left(\frac{\bm\beta^\top\bm \tau }{\left(\bm \beta^\top\Sigma \bm \beta\right)^\frac{1}{2}}\right) \\
    &= \frac{(\bm \beta^\top\Sigma \bm \beta)^\frac{1}{2}\nabla_{\bm{\beta}}(\bm\beta^\top\bm \tau) - \bm\beta^\top\bm \tau \nabla_{\bm{\beta}}((\bm \beta^\top\Sigma \bm \beta)^\frac{1}{2})}{\bm\beta^\top\Sigma\bm\beta} \\
    &= \frac{(\bm \beta^\top\Sigma \bm \beta)^\frac{1}{2}\bm \tau - \frac{1}{2}\bm\beta^\top\bm \tau ((\bm \beta^\top\Sigma \bm \beta)^{-\frac{1}{2}})\cdot 2\bm\beta^\top\Sigma}{\bm\beta^\top\Sigma\bm\beta} \\
    &= \frac{1}{(\bm\beta^\top\Sigma\bm\beta)^{\frac{1}{2}}}\left(\bm\tau - \Sigma\bm\beta\left(\frac{\bm\beta^\top\bm\tau}{(\bm\beta^\top\Sigma\bm\beta)}\right)\right)\numberthis{}\label{eq:44} \\
    &= \frac{1}{(\bm\beta^\top\Sigma\bm\beta)^{\frac{1}{2}}}\bm{z}.
\end{align*}

\subsection{Proof of Equation \eqref{eq:independence_X.z}}\label{app:correlation_X,z}
In the proof of Theorem \ref{thm:continuous} we used that $\tilde{X}$ and $\bm z$ are independent. Which we prove here.
Since $\tilde{X}$ and $\bm z$ are jointly normal, we only need to show that they are uncorrelated. To do so recall that we are only interested in the distribution under the null and hence $\bm 0 = \expect{}{\bm \tau} = \expect{}{\tilde{X}} = \expect{}{\bm z}$. Since $\tilde{X}_u = 0$ for all $u \notin \mathcal{U}$ and $\bm z _u'=0$ for all $u'\in \mathcal{U}$, it suffices to show that $\tilde{X}_j$ is uncorrelated with $z_i$ for all $i \notin \mathcal{U}$, $j\in \mathcal{U}$.
\begin{align*}
    &\text{Cov}\left[ z_i, \tilde{X}_j \right] = \expect{}{z_i \tilde{X}_j} = \expect{}{\left( \tau_i - (\Sigma \pseudo\bm\tau)_i \right) ((\pseudo)^\frac{1}{2} \bm\tau)_j} \\
    &= \sum_{u\in \mathcal{U}} ((\pseudo)^\frac{1}{2})_{ju} \expect{}{\tau_i, \tau_u} - 
    \sum_{\substack{s,t,u \in \mathcal{U}}} ((\pseudo)^\frac{1}{2})_{ju}\Sigma_{is} \pseudo_{st} \expect{}{\tau_t \tau_u}\\
    &= \sum_{u\in \mathcal{U}} ((\pseudo)^\frac{1}{2})_{ju} \Sigma_{iu} - 
    \sum_{\substack{s,t,u \in \mathcal{U}}} ((\pseudo)^\frac{1}{2})_{ju}\Sigma_{is} \pseudo_{st} \Sigma_{tu}\\
    &= \left((\pseudo)^\frac{1}{2} \Sigma \right)_{ji} -  \left(\Sigma \pseudo \Sigma (\pseudo)^\frac{1}{2} \right)_{ij} \\
    &= \left((\pseudo)^\frac{1}{2} \Sigma \right)_{ji} -  \left(\Sigma  (\pseudo)^\frac{1}{2} \right)_{ij} = 0.
\end{align*}
Thus $\tilde{X}$ and $\bm z$ are uncorrelated and independent.

\section{Solution of the continuous optimization problem}\label{sec:optimization}
The presented solution is similarly described in \citet[Sec.~4]{Gretton2012optimal}. There an $L1$ norm constraint was used, which, however does not change anything. For completeness we include it here. We define
\begin{align*}
    f(\bm\beta) := \frac{\bm\beta^\top\bm\tau}{(\bm{\beta}^\top\Sigma \bm{\beta})^\frac{1}{2}},
\end{align*}
and we want to find 
\begin{align*}
    \bm\beta^* = \argmax_{\bm\beta \geq \bm 0, \|\bm\beta\|=1} \frac{\bm\beta^\top\bm\tau}{(\bm{\beta}^\top\Sigma \bm{\beta})^\frac{1}{2}}.
\end{align*}
Since $f$ is a homogeneous function of order 0 in $\bm\beta$ we have $f(c \bm\beta) = f(\bm\beta)$ for any $c > 0$. We can thus solve the relaxed problem (we implicitly exclude $\bm\beta = \bm 0$)
\begin{align*}
    \bm\beta' = \argmax_{\bm\beta \geq \bm 0} f(\bm\beta).
\end{align*}
The solution of the original problem is then simply given as a rescaled version of the relaxed problem $\bm\beta^* = \frac{\bm\beta'}{\|\bm\beta'\|}$. We shall solve the relaxed problem for two different cases.
\begin{enumerate}
    \item[i)] $\exists u \in [d]: \tau_u \geq 0$.
    
    In this case, we know that $\max_{\bm\beta \geq \bm 0} f(\bm\beta) \geq 0$ and hence $\bm\beta' = \argmax_{\bm\beta \geq \bm 0} f(\bm\beta) \Leftrightarrow \bm\beta' = \argmax_{\substack{\bm\beta \geq \bm 0\\
    f(\bm\beta)\geq 0}} f^2(\bm\beta)$. The set $S:=\{\bm\beta \in \mathbb{R}^d | \bm\beta \geq \bm 0,
    f(\bm\beta)\geq 0\}$ is convex and the functions $g_1(\bm\beta) := (\bm\beta^\top\bm\tau)^2$ and $g_2(\bm\beta) := \bm{\beta}^\top\Sigma \bm{\beta}$ are convex (recall that $\Sigma$ is a positive matrix). Thus our problem becomes
    \begin{align*}
        \bm\beta' = \argmax_{\bm\beta \in S} \frac{g_1(\bm\beta)}{g_2(\bm\beta)},
    \end{align*}
    which is a concave fractional program. 
    In our implementation we solve it by fixing $\bm\beta^\top\bm\tau = a$ for some $a>0$ and then minimizing the denominator. Thus we are solving the quadratic optimization problem
    \begin{align*}
        \text{minimize} &\quad\bm\beta^\top\Sigma \bm\beta\\
        \text{subject to:} &\quad{\bm\beta \geq \bm 0}\\
        &\quad\bm\beta^\top\bm\tau = a.
    \end{align*}
    We solve this problem with the CVXOPT python package \citep{vandenberghe2010cvxopt}.
    
    \item[ii)] $\tau_u < 0 \, \forall u \in [d]$.
    
    In this case we have ${\bm\beta^*}^\top\bm \tau < 0$. By \eqref{eq:positive_if_l>2} we have $l=1$.
    Thus we simply $\bm\beta^* = e_{u^*}$, where $u^* = \argmax_{u \in [d]} \frac{\tau_u}{\Sigma_{u,u}}$. 
    
\end{enumerate}
Note that in the case $\bm\tau = \bm 0$, $\bm\beta^* $ is not well defined and we could randomly select any $\bm\beta^*$. However, the probability of this happening is 0.

\section{Other proofs}
\subsection{Proof of Corollary \ref{cor:selection_discrete}}\label{app:proof_cor_discrete}
As we pointed out in the main paper, when selecting a test from a countable number of test that can be written as projections of the base tests $\bm\tau$ we can use the results of \citet{Lee2016}. For completeness we explicitly include the relevant theorem.
\begin{thm}[Polyhedral Lemma \citep{Lee2016}, Theorem 5.2]\label{thm:poly}
Let $\bm{\tau} \sim \mathcal{N}(\bm{\mu}, \Sigma)$, $\bm\eta, \bm\mu \in \mathbb{R}^d$, $\Sigma \in \mathbb{R}^{d\times d}$ positive definite, and $A \in \mathbb{R}^{s\times d}$, $\bm b \in \mathbb{R}^s$ for some $s\in \mathbb{N}$. Define $\bm{c} := \Sigma \bm\eta \left(\bm{\eta}^\top \Sigma \bm{\eta}\right)^{-1}$ and $ \bm{z} := \left(I_d - \bm{c}\bm{\eta}^\top\right) \bm{\tau}$. Then we have
\begin{align*}
        \left[\bm\eta^\top \bm\tau | A \bm\tau \leq \bm{b}, \bm z = \hat{\bm z}\right] 
    \overset{d}{=}  \text{TN}\left(\bm\eta^\top \bm\mu, \bm\eta^\top \Sigma \bm\eta, \mathcal{V}^- (\hat{\bm{z}}), \mathcal{V}^+(\hat{\bm{z}}) \right),
\end{align*}
where $\text{TN}(\mu, \sigma^2, a,b)$ denotes a Gaussian distribution with mean $\mu$ and variance $\sigma^2$ that is truncated at $a$ and $b$. Here 
\begin{align*}
    \mathcal{V}^- (\bm{z}) := \max_{j: (A\bm c)_j < 0} \frac{\bm b_j - (A\bm  z)_j}{(A\bm  c)_j}, \quad
     \mathcal{V}^+ (\bm{z}):= \min_{j: (A\bm c)_j > 0} \frac{\bm b_j - (A\bm  z)_j}{(A\bm  c)_j}.
\end{align*}
\end{thm}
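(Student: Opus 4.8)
The plan is to collapse the multivariate conditional law onto a one-dimensional truncated normal by decomposing $\bm\tau$ into its component along the contrast $\bm\eta$ and a $\Sigma$-orthogonal remainder. Writing $W := \bm\eta^\top\bm\tau$ for the scalar whose conditional law we want, the key starting identity is
\begin{align*}
\bm\tau = \bm c\,(\bm\eta^\top\bm\tau) + \bm z = \bm c\,W + \bm z,
\end{align*}
which holds because $\bm c\bm\eta^\top + (I_d - \bm c\bm\eta^\top) = I_d$ by the definitions of $\bm c$ and $\bm z$. First I would show that $W$ and $\bm z$ are independent: being linear images of the Gaussian $\bm\tau$ they are jointly Gaussian, so it suffices to verify $\mathrm{Cov}(W,\bm z) = \bm 0$, and a direct computation gives $\mathrm{Cov}(\bm\eta^\top\bm\tau,(I_d-\bm c\bm\eta^\top)\bm\tau) = \bm\eta^\top\Sigma - (\bm\eta^\top\Sigma\bm\eta)\,\bm c^\top = \bm 0$ upon substituting $\bm c^\top = (\bm\eta^\top\Sigma\bm\eta)^{-1}\bm\eta^\top\Sigma$. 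Marginally, $W \sim \mathcal{N}(\bm\eta^\top\bm\mu,\bm\eta^\top\Sigma\bm\eta)$.

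Next I would translate the polyhedral selection event into an interval constraint on $W$. Substituting $\bm\tau = \bm c W + \bm z$ into $A\bm\tau \leq \bm b$ turns row $j$ into the scalar inequality $(A\bm c)_j\,W \leq b_j - (A\bm z)_j$. Splitting on the sign of $(A\bm c)_j$, the rows with $(A\bm c)_j > 0$ produce upper bounds on $W$ whose minimum is $\mathcal{V}^+(\bm z)$, the rows with $(A\bm c)_j < 0$ produce lower bounds whose maximum is $\mathcal{V}^-(\bm z)$, and the rows with $(A\bm c)_j = 0$ reduce to $0 \leq b_j - (A\bm z)_j$, a condition on $\bm z$ alone that does not involve $W$. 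Thus $\{A\bm\tau \leq \bm b\}$ is equivalent to $\{\mathcal{V}^-(\bm z) \leq W \leq \mathcal{V}^+(\bm z)\}$ intersected with a residual event measurable with respect to $\bm z$.

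Finally I would condition on $\bm z = \hat{\bm z}$. Once $\bm z$ is fixed, the endpoints $\mathcal{V}^-(\hat{\bm z})$ and $\mathcal{V}^+(\hat{\bm z})$ and the residual $\bm z$-only event are all determined, and on the region where that residual event holds the degenerate rows impose no further restriction on $W$. Since $W$ is independent of $\bm z$, conditioning on $\bm z = \hat{\bm z}$ leaves the law of $W$ unchanged, so the conditional law of $W$ given $\{A\bm\tau \leq \bm b,\ \bm z = \hat{\bm z}\}$ is $\mathcal{N}(\bm\eta^\top\bm\mu,\bm\eta^\top\Sigma\bm\eta)$ restricted to $[\mathcal{V}^-(\hat{\bm z}),\mathcal{V}^+(\hat{\bm z})]$, that is $\mathrm{TN}(\bm\eta^\top\bm\mu,\bm\eta^\top\Sigma\bm\eta,\mathcal{V}^-(\hat{\bm z}),\mathcal{V}^+(\hat{\bm z}))$. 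The hard part will be the bookkeeping in the middle step: correctly handling the three sign cases for $(A\bm c)_j$ and confirming that the degenerate rows become purely $\bm z$-measurable and hence drop out after conditioning, so that the truncation interval has exactly the stated endpoints.
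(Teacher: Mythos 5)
Your proof is correct, but note that the paper never proves this statement itself: it imports it verbatim as Theorem 5.2 of \citet{Lee2016} (``For completeness we explicitly include the relevant theorem''), so there is no paper-internal proof to compare against. Your argument --- decomposing $\bm\tau = \bm c\,(\bm\eta^\top\bm\tau) + \bm z$, verifying $\mathrm{Cov}(\bm\eta^\top\bm\tau,\bm z)=\bm 0$ so that joint Gaussianity yields independence, sign-splitting the rows of $A\bm\tau \leq \bm b$ into lower bounds for $(A\bm c)_j<0$, upper bounds for $(A\bm c)_j>0$, and purely $\bm z$-measurable residual constraints for $(A\bm c)_j=0$, then conditioning on $\bm z=\hat{\bm z}$ --- is precisely the original proof strategy of \citet{Lee2016} (their Lemma 5.1 plus Theorem 5.2, where your residual rows appear explicitly as the event $\mathcal{V}^0(\bm z)\geq 0$, which the paper's abridged statement silently absorbs into the conditioning). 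The only point to make explicit in a full write-up is that conditioning on the measure-zero event $\bm z = \hat{\bm z}$ must be read as a regular conditional distribution (a disintegration), which is unproblematic for jointly Gaussian variables, and that $\hat{\bm z}$ is implicitly assumed compatible with the selection event so the conditioning event is nondegenerate; with those caveats your handling of the degenerate rows --- fixed by $\hat{\bm z}$, hence imposing no restriction on $W$ --- is exactly right.
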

Note that $\bm c$ is simply a fixed vector. $\bm z$ is a random variable that
can be shown to be independent of $\bm\eta^\top \bm\tau$.
The result enables us to draw a
realization $\hat{\bm\tau}$ of the random variable (RV) $\bm\tau$ and select
$\bm\eta$ if $A\hat{\bm\tau} \leq \bm b$. Since the truncation
points of the Gaussian only depend on $\hat{\bm z}$, and ${\bm z}$ is
independent of $\bm\eta^\top {\bm\tau}$, we can compute a reliable $p$-value
of $\bm\eta^\top \hat{\bm\tau}$ by using Theorem \eqref{thm:poly}.

\begin{proof}[Proof of Corollary \ref{cor:selection_discrete}]
We need the distribution of $\frac{\tau_{u^*}}{\sigma_{u^*}}$ after conditioning on the selection of $u^*$. 
To obtain this distribution we first need to characterize the event that leads to the selection of $u^*$. The selection event simply is $u^* = \argmax_{u \in [d]} \frac{\tau_u}{\sigma_u} \Leftrightarrow  \frac{\tau_{u^*}}{\sigma_{u^*}} \geq \frac{\tau_u}{\sigma_u} \text{ for all } u\in [d]$.
Therefore, define the matrix $A:= \text{diag}(\frac{1}{\sigma_1}, \dots, \frac{1}{\sigma_d}) - \frac{1}{\sigma_{u^*}} A(u^*)$, where $\text{diag}(\cdot)$ defines a $d\times d$ matrix with the arguments on its diagonal and zeros everywhere else and $A(\cdot)$ is a $d\times d$ matrix with ones in the column given by its argument and zeros everywhere else.
It follows that
$
    (A\bm\tau)_j = \frac{\tau_j}{\sigma_j} - \frac{\tau_{u^*}}{\sigma_{u^*}},
$
and $u^* = \argmax_{u \in [d]} \frac{\tau_u}{\sigma_u}$ is equivalent to $A{\bm\tau} \leq \bm 0 =: \bm b$. Apart from this we define $\bm\eta := \frac{{e}_{u^*}}{\sigma_{u^*}}$, so that $\bm\eta^\top \bm\tau = \frac{\tau_{u^*}}{\sigma_{u^*}}$. Then we can define $\bm{c} := \Sigma \bm\eta \left(\bm{\eta}^\top \Sigma \bm{\eta}\right)^{-1}$ and $ \bm{z} := \left(I_d - \bm{c}\bm{\eta}^\top\right) \bm{\tau}$ as in Theorem \ref{thm:poly}, and denote by $\hat{\bm z}$ the value of the random variable $\bm z$ that we observed (note that this coincides with the definition we used for $\bm z$ in the Corollary). 
By our definitions we have $(A\bm c)_j = \frac{\Sigma_{ju^*}/\sigma_j - \sigma_{u^*}}{\sigma_{u^*}} = \frac{1}{\sigma_{u^*}\sigma_j} \left(\Sigma_{u^* j} - \sigma_u^* \sigma_j\right)$. 
Since $\Sigma$ is positive definite, $(A\bm c)_j <
0$ if $j\neq u^*$ and $(A\bm c)_{u^*}= 0$. Thus according to Theorem
\ref{thm:poly}, $\mathcal{V}^+$ is an optimization over an empty set and we can set it
to $\infty$.
    Further $(A\bm z)_j = \frac{1}{\sigma_{u^*}\sigma_j} \left(\tau_j \sigma_{u^*} - \frac{\Sigma_{u^*j}}{\sigma_{u^*}}\tau_{u^*}\right)$. Combining the previous two expressions we obtain
    $\frac{- (A\bm z)_j}{(A\bm c)_j} = \frac{\tau_j \sigma_{u^*} -\frac{ \Sigma_{u^*j}}{\sigma_{u^*}}\tau_{u^*}}{\sigma_u^* \sigma_j - \Sigma_{u^* j}} = \frac{\sigma_{u^*} z_j}{\sigma_u^* \sigma_j - \Sigma_{u^* j}} $. We can then directly apply Theorem \ref{thm:poly} and the result follows.
\end{proof}

\subsection{Proof of Equation \eqref{eq:beta_inf}}\label{sec:proof_beta_inf}
In the main paper we omitted the proof of the closed form solution of $\bm\beta^\infty$. 
We thus need to show 
\begin{align*}
    \argmax_{\|\bm\beta\| = 1} \frac{\bm\beta^\top \bm\mu}{(\bm\beta^\top \Sigma \bm\beta)^\frac{1}{2}} = \frac{\Sigma^{-1}\bm\mu}{\|\Sigma^{-1}\bm\mu\|}.
\end{align*}
\begin{proof}
We are only interested in $\bm\beta^\infty$ if the alternative hypothesis is
true and thus at least one entry of $\bm\mu$ is positive. We further assume
that the covariance $\Sigma$ has full rank. Hence there exists a $b >0$ such
that $\bm\beta^\top \Sigma \bm\beta > b$  for all $\bm\beta$ with $\|\bm\beta\| =1$, i.e., the
denominator $(\bm\beta^\top \Sigma \bm\beta)^\frac{1}{2}$  
is strictly positive and has a lower bound. Since $\bm\mu \neq \bm 0$, this implies that $\max_{\|\bm\beta\| = 1} \frac{\bm\beta^\top \bm\mu}{(\bm\beta^\top \Sigma \bm\beta)^\frac{1}{2}} > 0$.  Also the nominator has an upper
bound which is given by $\bm\beta^\top \bm\mu \leq \bm\mu^\top \bm\mu /
\|\bm\mu\|$ if $\|\bm\beta\| =1$. Hence the whole maximization is upper
bounded. Since the unit sphere in $\mathbb{R}^d$ is a compact set, we can
conclude that the maximum of the objective is attained. Thus it suffices to
show that for all $\bm\beta\neq \bm\beta^\infty$ the objective is not
maximized. In the following, we use that the objective of the maximization is a
homogeneous function of order 0 in $\bm\beta$ and hence we can relax the
constraint $\|\bm\beta\|=1$ to $\bm\beta \neq \bm 0$ (note that this not
affect the existence of the maximum).
As we showed in Appendix \ref{app:gradient}, the gradient of the objective function is given by 
\begin{align*}
    \nabla_{\bm\beta} \frac{\bm\beta^\top \bm\mu}{(\bm\beta^\top \Sigma \bm\beta)^\frac{1}{2}} = \frac{1}{(\bm\beta^\top\Sigma\bm\beta)^{\frac{1}{2}}}\left(\bm\mu - \Sigma\bm\beta\left(\frac{\bm\beta^\top\bm\mu}{(\bm\beta^\top\Sigma\bm\beta)}\right)\right).
\end{align*}
Setting the gradient to zero we obtain
\begin{align*}
    \nabla_{\bm\beta} \frac{\bm\beta^\top \bm\mu}{(\bm\beta^\top \Sigma \bm\beta)^\frac{1}{2}} = \bm 0 \Leftrightarrow \bm\beta = c  \cdot {\Sigma^{-1}\bm\mu} \text{ for some } c \in \mathbb{R}.
\end{align*}
If $c < 0$ the objective attains a negative value, since $\Sigma^{-1}$ is a strictly positive matrix, and thus does not correspond to the global maximum, which we already know to be positive. Thus, the maximum has to be attained for some $c > 0$. Using the constraint $\|\bm\beta\| =1$ it follows that the global optimum is attained at $\bm\beta^\infty$.
\end{proof}

\section{Experimental details and further experiments}\label{app:Experimental_Details}
We first give some details on the experiments we showed in the main paper. 
For all the experiments we start with a set of $d$ base kernels $\mathcal{K} = [k_1,\dots, k_d]$ that are chosen independently of the observed data samples $X = \{x_1,\dots,x_{2n}\} \sim P^{2n}$ and $Y=\{y_1,\dots,y_{2n}\}\sim Q^{2n}$. 
First, we define $z_i := (x_i, x_{n+i}, y_i, y_{n+i})$ and compile $X$ and $Y$ into $Z = \{z_1, \dots, z_n\}$. For each kernel we define $h_i (z) := h_i(x,x',y,y') := k_i(x, x') + k_i(y, y') - k_i(x, y') - k_i(y, x')$. For all the methods we estimate the covariance matrix on the whole dataset as
\begin{align*}
    \hat{\Sigma}_{ij} = \frac{1}{n}\sum_{k=1}^n h_i(z_k)h_j(z_k) -  \frac{1}{n}\sum_{k=1}^n h_i(z_k)\frac{1}{n}\sum_{k'=1}^n h_j(z_{k'}).
\end{align*}
We then further assume that $\Sigma=\hat{\Sigma}$ which is justified since the CLT also works with a consistent estimate of the covariance.
For all the methods that do not split the data (\textsc{\name}, \textsc{Wald}, and \textsc{Naive}) we estimate the entries of $\hat{\bm\tau}$ as 
\begin{align*}
    \hat\tau_i = \sqrt{n} \,\widehat{\text{MMD}}^2_\text{lin} (P,Q) = \sqrt{n}\, \frac{1}{n} \sum_{k=1}^n h_i(z_k),
\end{align*}
i.e., we directly absorb the $\sqrt{n}$ dependence of the asymptotic distribution into $\bm\tau$.
For data splitting we estimate $\hat{\bm\tau}_\text{tr}$ on a split of the data and $\hat{\bm\tau}_\text{te}$ on the other split. For example \textsc{split0.3} means that $30\%$ of the data are used to estimate $\hat{\bm\tau}_\text{tr}$ and $70\%$ used to estimate $\hat{\bm\tau}_\text{te}$. We assume that the number of samples in the respective subsets are even and otherwise neglect some samples.

\vspace{-5pt}
\paragraph{Methods} We compare four different methods:
\begin{enumerate}[i)]
    \item \textsc{\name}: The test we recommend to use, as described in  Algorithm \ref{alg:continuous}.
    \item \textsc{Wald}: The Wald test, which does not take into account the prior information $\bm\mu \geq \bm 0$.
    \item \textsc{split}: Data splitting similar to the approach in \citet{Gretton2012optimal}. \textsc{split0.3} denotes that 30\% of the data are used for learning $\bm\beta^*$ and 70\% are used for testing. Here we first, learn $\bm\beta^*$ on the training sample, i.e., ${\bm\beta}^* =  \argmax_{\|\Sigma\bm\beta\|=1,\Sigma\bm\beta \geq 0} \frac{\bm{\beta}^\top \bm\tau_{tr}}{(\bm{\beta}^\top \Sigma \bm{\beta})^\frac{1}{2}}$. We then use the test statistic $\frac{\hat{\bm{\beta}}^\top \bm\tau_{te}}{(\hat{\bm{\beta}}^\top \Sigma \hat{\bm{\beta}})^\frac{1}{2}}$, which follows a standard normal under the null. This differs from the approach in \citet{Gretton2012optimal}, since we optimize with the constraints $\Sigma \bm\beta \geq \bm 0$, whereas \citet{Gretton2012optimal} suggested a simple positivity constraint $\bm\beta \geq \bm 0$. We discuss this in Section \ref{app:constraints}.
    \item \textsc{naive}: Two stage procedure where all the data is used for learning and testing without correcting for the dependency, i.e., without splitting the data. Thus the test statistic is the same as for \textsc{\name}, but we work with the wrong null distribution, i.e., the one that is only valid for data splitting. This approach is not a well-calibrated test, see Fig. \ref{fig:typeI} and hence is useless.
\end{enumerate}

\vspace{-5pt}
\paragraph{Datasets}
The \texttt{DIFF VAR} dataset is a simple one-dimensional toy dataset, where $P = \mathcal{N}(0,1)$ and $Q=\mathcal{N}(0,1.5)$.

The \texttt{Blobs} dataset was constructed using a mixture of 2D Gaussians on a $3\times 3$ grid. The centers of the Gaussians are set to $\mu_1, \dots, \mu_9 = (0,0),(0,1),(0,2), (1,0),(1,1),(1,2), (2,0),(2,1),(2,2)$ and the covariances are $\Sigma_P = \text{diag}(0.1, 0.3)$ and $\Sigma_Q = \text{diag}(0.3, 0.1)$.
Samples from $P$ and $Q$ are shown in Figure \ref{fig:blobs_samples}. The \texttt{Blobs} dataset is constructed such that the main variance in the data does not reflect the difference between $P$ and $Q$, which happens on a smaller length scale. This is inspired by \citet{Gretton2012optimal}, where similar data has been considered to showcase that such problems benefit from careful kernel choice. We can reproduce this behavior with our results, which show that for this dataset the performance is bad if one only considers the median heuristic Gaussian kernel together with a linear kernel. 
\begin{figure}[t]
\centering
\begin{subfigure}{}
    \centering
    \includegraphics[width=.45\linewidth]{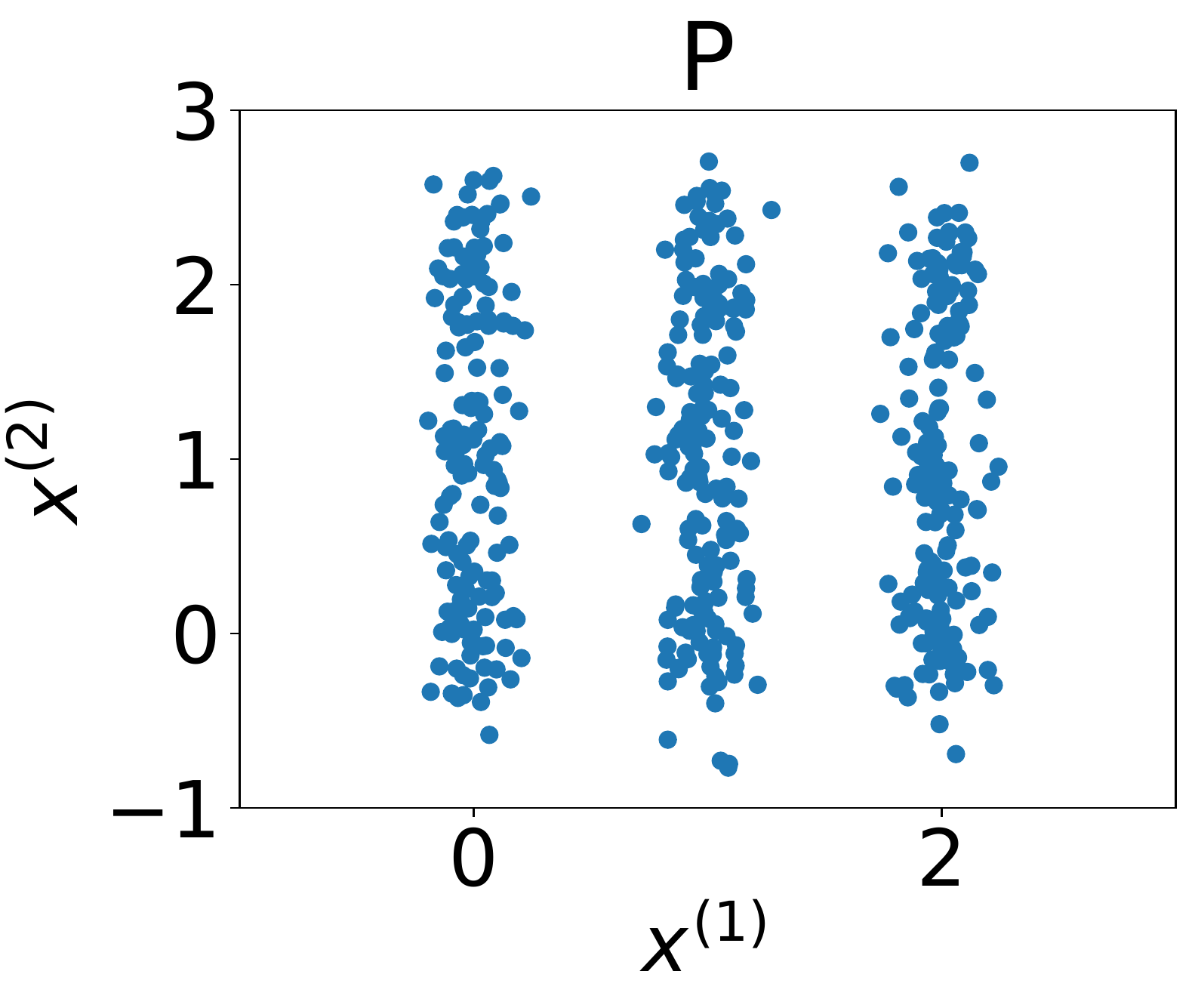}
\end{subfigure}
\begin{subfigure}{}
    \centering
    \includegraphics[width=.45\linewidth]{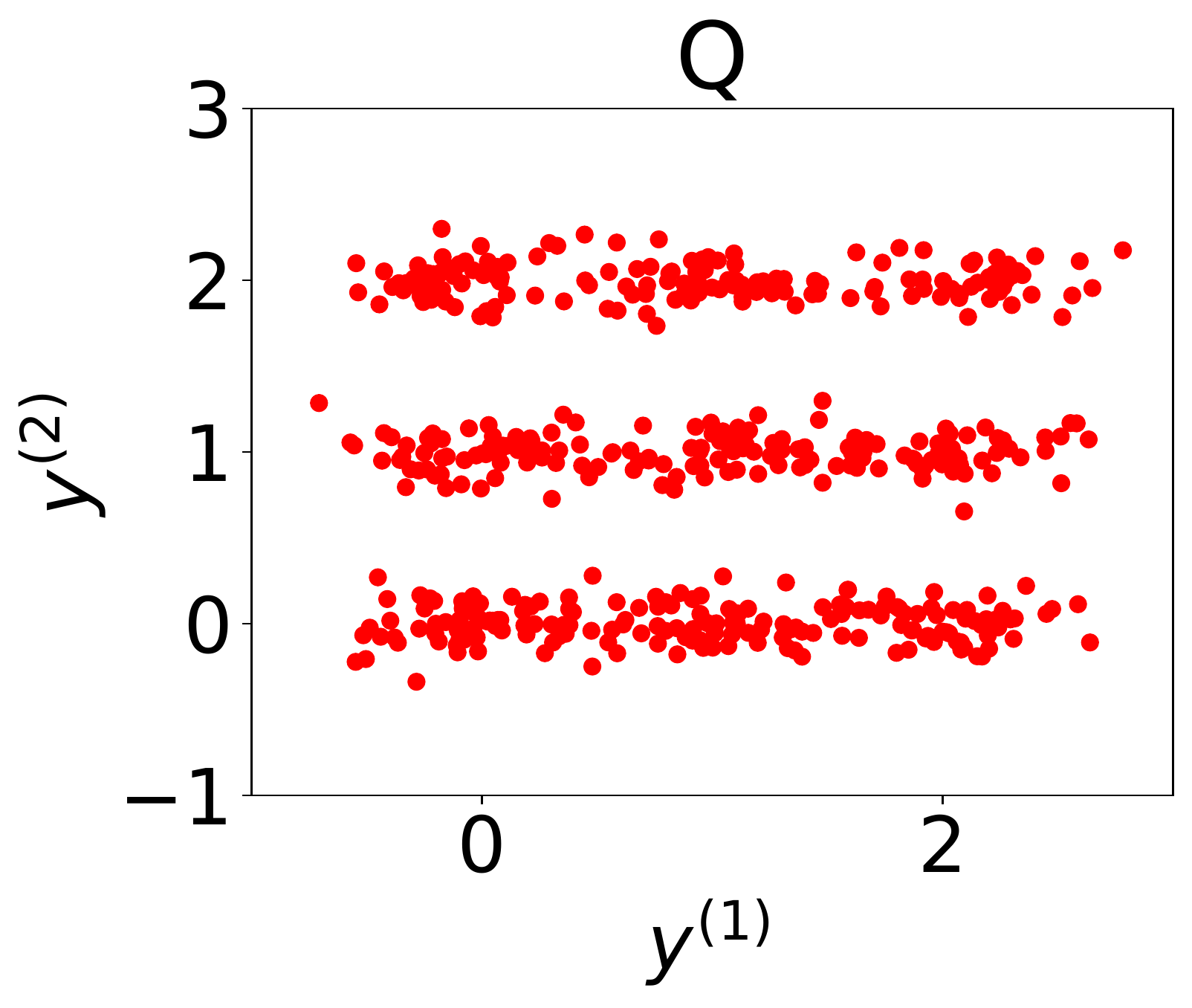}
\end{subfigure}
\caption{Samples from \textsc{Blobs} dataset.}
\label{fig:blobs_samples}
\end{figure}

The \texttt{MNIST} dataset was constructed by first downsampling all the images to $7\times 7$ pixels (originally $28\times 28$), by simply averaging over fields of $4\times 4$ pixels. We define $P$ to contain all the digits, while $Q$ only contains uneven digits. For our experiments we draw with replacement from the images in the database. Some samples from both distributions are shown in Figure \ref{fig:mnist_samples}.

\begin{figure}[t]
\centering

\begin{subfigure}{}
    \centering
    \includegraphics[width=.45\linewidth]{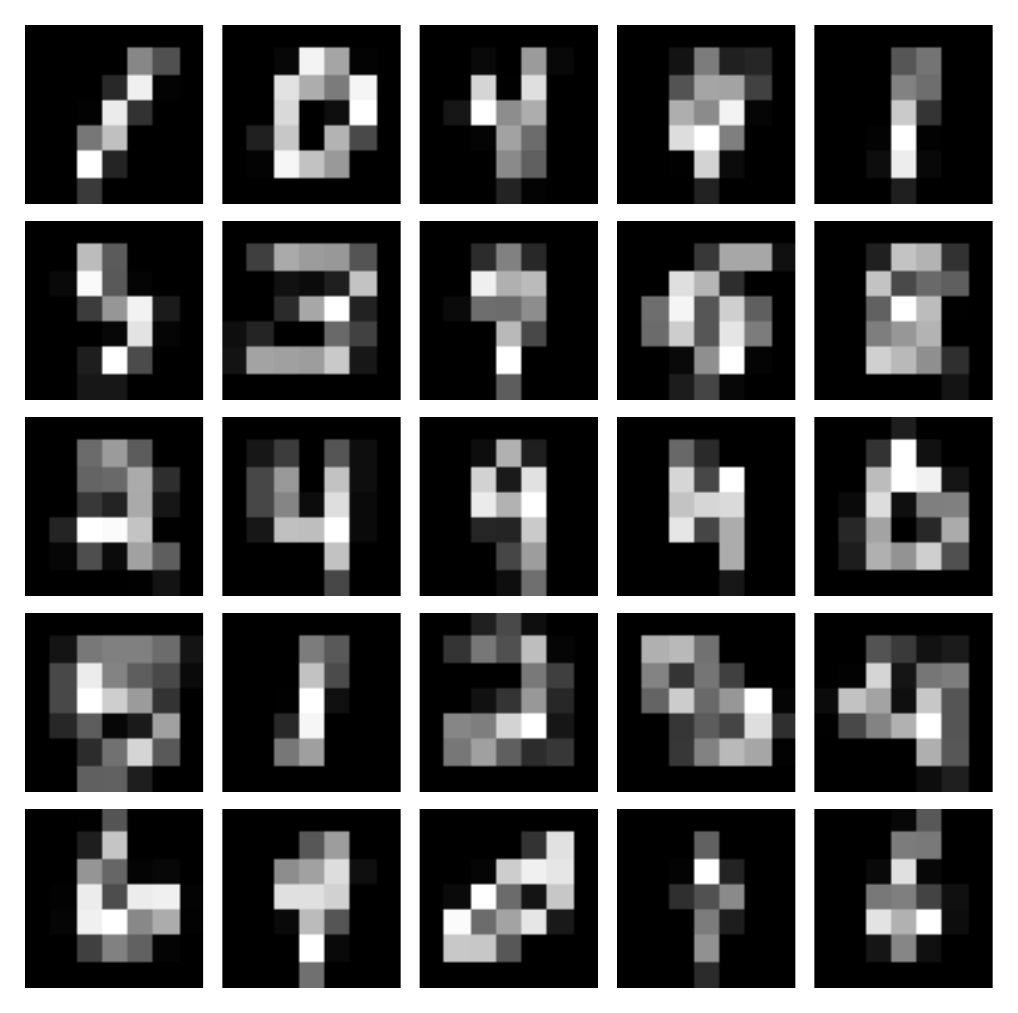}
\end{subfigure}
\begin{subfigure}{}
    \centering
    \includegraphics[width=.45\linewidth]{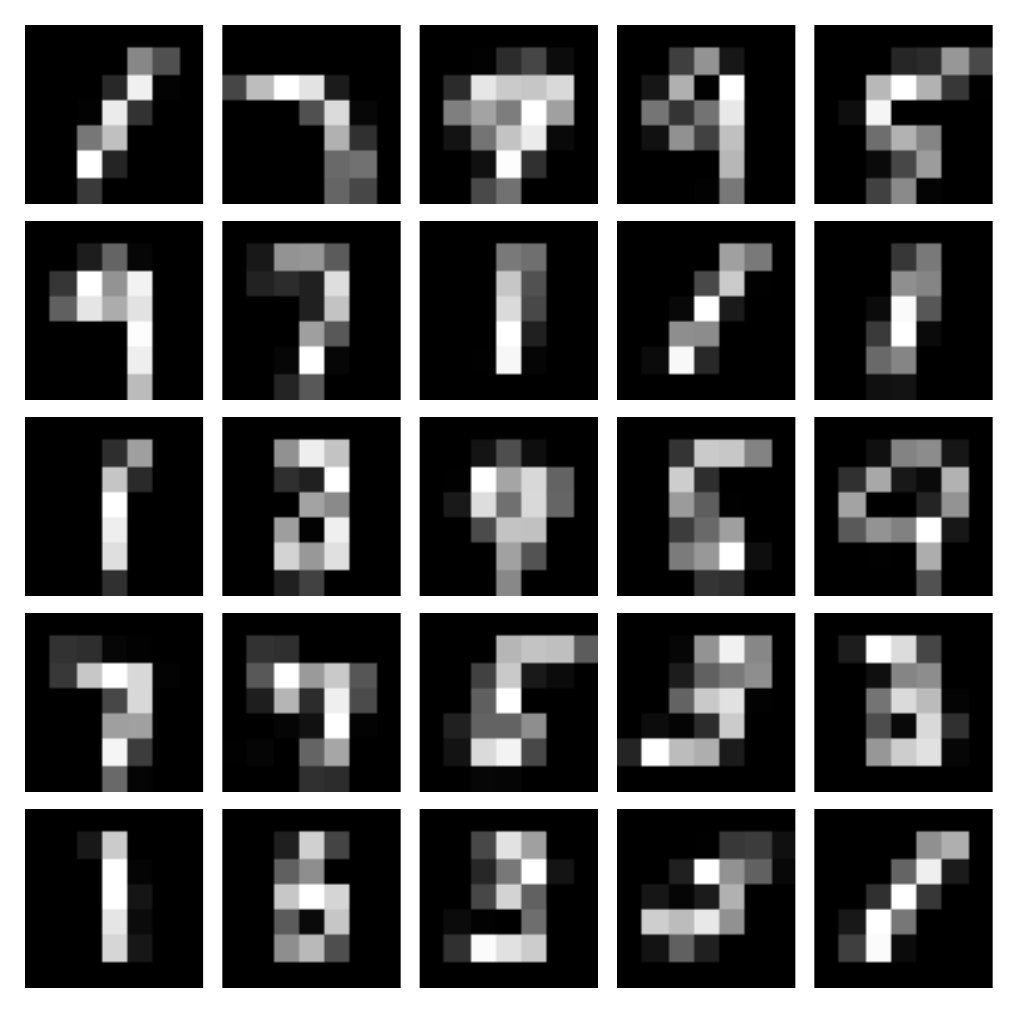}
\end{subfigure}
\caption{Samples from downsampled \textsc{MNIST} dataset. $P$ (left) contains all digits, while $Q$ (right) only contains uneven digits.}
\label{fig:mnist_samples}
\end{figure}

\paragraph{Experiments for Figure \ref{fig:laplaceII}} For Figure \ref{fig:laplaceII} we constructed a $1$-D data set such that both $P$ and $Q$ are symmetric (thus all uneven moments vanish) and have the same variance, see Figure \ref{fig:plot_uniform}.
\begin{figure}
    \centering
    \includegraphics[width=0.5\linewidth]{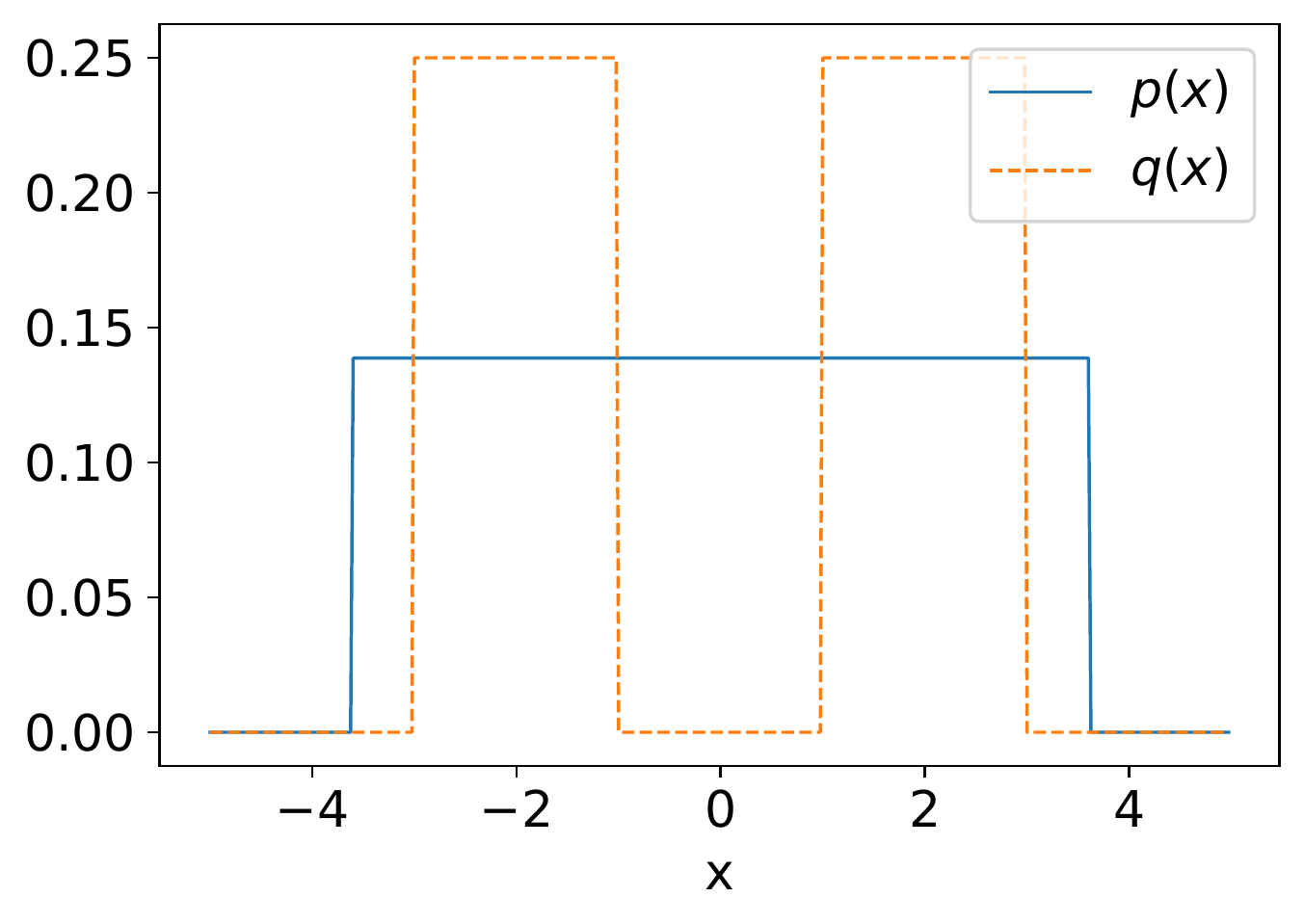}
    \caption{Probability density functions used for the experiment in Figure \ref{fig:laplaceII} of the main paper. Both distributions are symmetric and are constructed to have the same variance.}
    \label{fig:plot_uniform}
\end{figure}

\subsection{Type-I errors}\label{app:type_I}
To verify which methods are theoretically justified, i.e., control the Type-I error at a level $\alpha=0.05$, we run the following experiments, similar to the experiments in the main paper, where $P=Q$.
\begin{enumerate}
    \item \textsc{diff var} ($p=1$): $P = \mathcal{N}(0,1)$ and $Q=\mathcal{N}(0,1)$. 
    \item \textsc{MNIST} ($p=49$): We consider downsampled 7x7 images of the MNIST dataset \citep{lecun2010mnist}, where $P$ contains all the digits and $Q = P$.
    \item \textsc{Blobs} ($p=2$): A mixture of anisotropic Gaussians and $P=Q$.
\end{enumerate}

The results are in Figure \ref{fig:typeI}. All the methods except \textsc{naive} correctly control the Type-I error at a rate $\alpha = 0.05$ even for relatively small sample sizes. Note that all the described approaches rely on the asymptotic distribution. The critical sample size, at which it is safe to use, generally depends on the distributions $P$ and $Q$ and also the kernel functions. A good approach to simulating Type-I errors in in two-sample testing problems is to merge the samples and then randomly split them again. If the estimated Type-I error is significantly larger that $\alpha$, working with the asymptotic distribution is not reliable.

\begin{figure}[t]
\centering
\includegraphics[width=\linewidth]{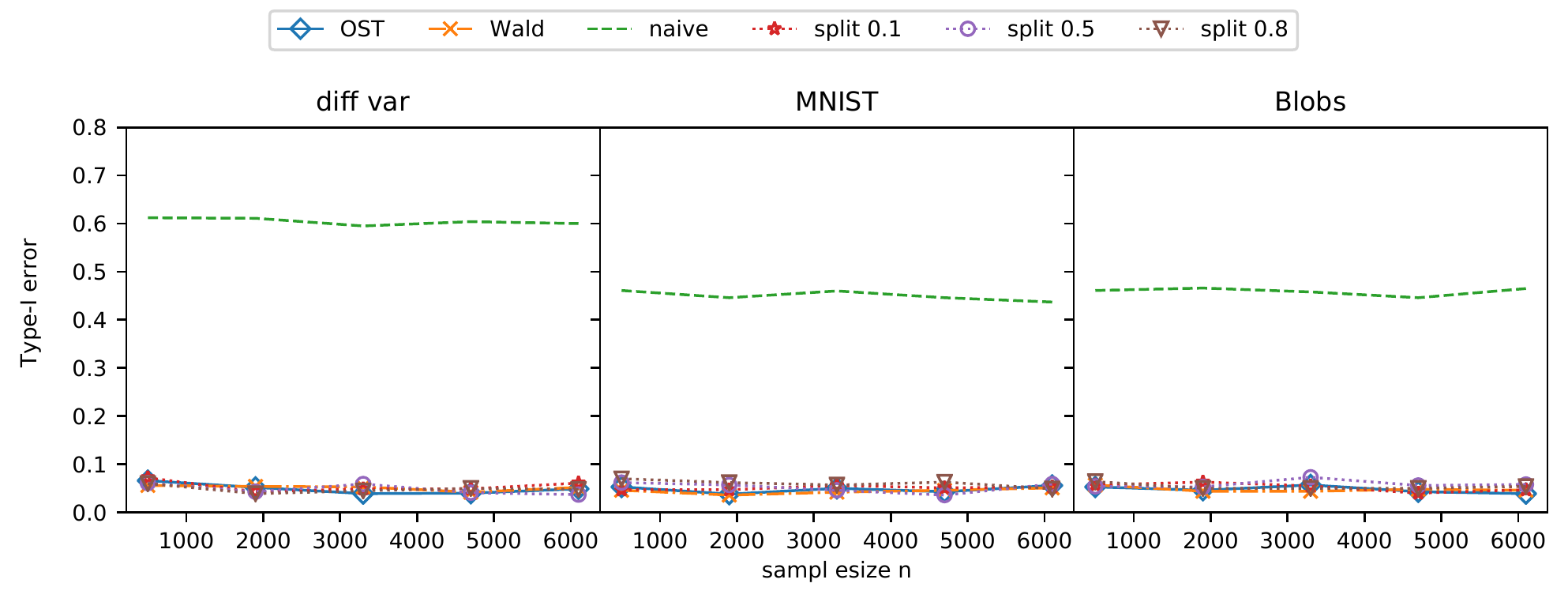}
\caption{Type-I errors for similar distributions as the one considered in the main paper. To simulate type-I errors we choose distributions $P=Q$ that are similar to the ones considered for the Type-II errors. We see that all well-calibrated methods reliably control the Type-I error at a rate $\alpha=0.05$, and conclude that working with the asymptotic distributions is well justified for the considered examples. The \textsc{naive} approach fails to control the error, as it overfits in the training phase without a correction in the testing phase.}
\label{fig:typeI}
\end{figure}

\subsection{Comparison of the constraints}\label{app:constraints}
In Section \ref{sec:continuous_general} we motivate to constrain the set of considered $\bm\beta$ to obey $\Sigma\bm\beta \geq \bm 0$, thus incorporating the knowledge $\bm\mu\geq \bm 0$. All our experiments suggest that this constraint indeed improves test power as compared to the general Wald test. In \citet{Gretton2012optimal} a different constraint was chosen. There $\bm\beta$ is constrained to be positive, i.e., $\bm\beta \geq \bm 0$. The motivation for their constraint is that the sum of positive definite (pd) kernel functions is again a pd kernel function \citep{Scholkopf01:LKS}. Thus, by constraining $\bm\beta \geq \bm 0$ one ensures that $k = \sum_{u=1}^d \beta_u k_u$ is also a pd kernel. While this is sensible from a kernel perspective, it is unclear whether this is smart from a hypothesis testing viewpoint. From the latter perspective we do not necessarily care whether or not $\bm\beta^*$ defines a pd kernel. Our approach instead was purely motivated to increase test power over the Wald test. In Figure \ref{fig:comparison} we thus compare the two different constraints to the Wald test on the examples that were also investigated in the main paper with $d=6$ kernels (again five Gaussian kernels and a linear kernel).

\begin{figure}
    \centering
    \includegraphics[width=\linewidth]{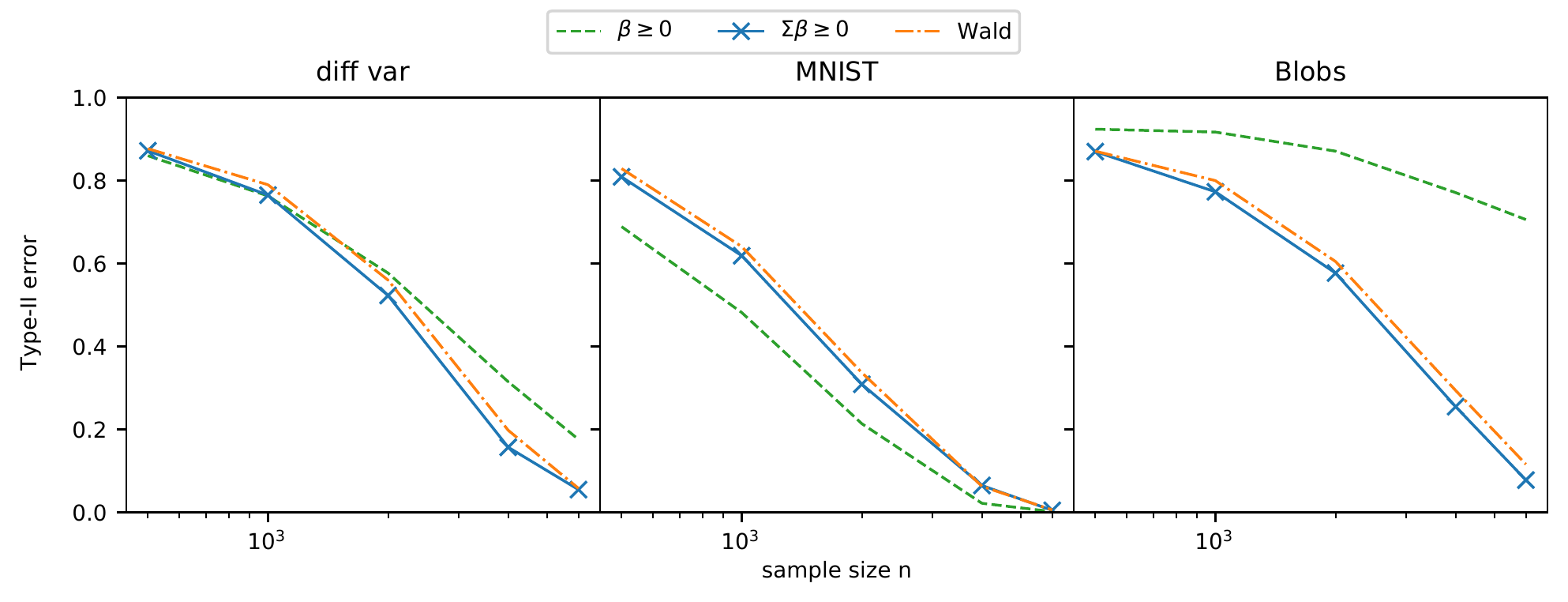}
    \caption{Comparison of the different constraints: In the main paper we argue that {\name} is a principled approach to constraint the class of considered tests, when $\bm\mu \geq \bm 0$ is guaranteed. \citet{Gretton2012optimal} suggested a different constraint $\bm\beta \geq \bm 0$. With Theorem \ref{thm:continuous}, we can also work with these constraints without data-splitting. The results suggest that indeed {\name} is a meaningful way to constrain the class of tests, as it consistently outperforms the Wald test. On the other hand the constraint suggested by \citet{Gretton2012optimal}, can only be seen as a heuristic. For some cases it performs better than the Wald test and the {\name}, but it can also perform worse.}
    \label{fig:comparison}
\end{figure}

From Figure \ref{fig:comparison} we observe that the positivity constraint of \citet{Gretton2012optimal} does not allow for general conclusions. Depending on the problem, the positivity constraint can both lead to higher or lower test power than the Wald test or tests with the constraint $\Sigma \bm\beta \geq \bm 0$. It will thus generally depend on the problem at hand which constraint is better. However, at least the approach we recommend ($\Sigma \bm\beta \geq \bm 0$) seems to guarantee a test power at least as high as the Wald test, whereas the positivity constraint can also be worse. As long as one has not a clear indication that the positivity constraint leads to better performance, we thus recommend the constraint $\Sigma \bm\beta \geq \bm 0$.

\subsection{Discrete selection from $T_\text{base}$}\label{app:discrete}

In this experiment, we use the same datasets and base kernels as for the experiment in the main paper. 
Instead of considering $T_\text{Wald}$ and $T_{\name}$, we consider $T_\text{base}$. 
We thus only compare to a data-splitting approach where also one of the base test statistics is selected. 
For completeness, we also include the \textsc{naive} approach, which again overfits for $d>1$. Note that the thresholds for $\tau_\text{base}$ can be computed with Corollary \ref{cor:selection_discrete} and do not rely on Theorem \ref{thm:continuous}. 
The results are shown in Figure \ref{fig:discrete}, again averaged over 5000 independent trials. 
In most of the cases, we observe that $\tau_\text{base}$ outperforms the data-splitting approaches. 
However, for the \textsc{MNIST} dataset and $d=2$, the splitting approach that uses $10\%$ for learning and $90\%$ for testing does perform slightly better.
Our attempt to explain this behavior lies in the truncation $\mathcal{V}^-$ of the conditional distribution. While for {\name}, we can show that $\mathcal{V}^- \leq 0$ (see proof of Theorem \ref{thm:continuous}), for Corollary \ref{cor:selection_discrete}, $\mathcal{V}^-$ cannot be bounded. If $\mathcal{V}^-$ is very large, the selected test is very conservative. We acknowledge that this is not a sufficient analysis of this phenomenon, but leave  a more theoretical treatment for future work. 
\begin{figure}
    \centering
    \includegraphics[width=1.\linewidth]{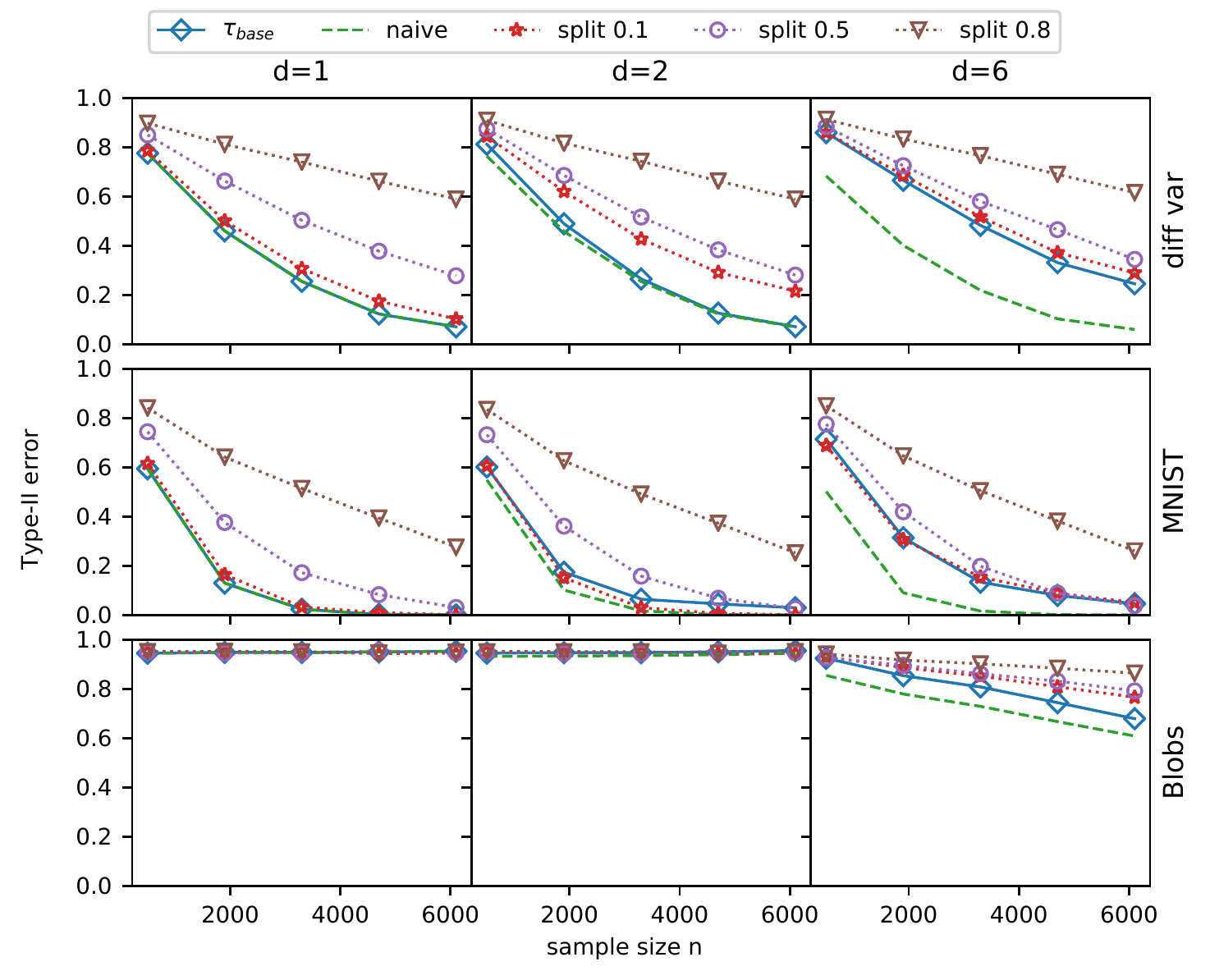}
    \caption{Type-II errors for discrete selection, i.e., the class of considered tests is $T_\text{base}$. The rows (columns) correspond to different datasets (sets of base kernels). Similar as in Figure \ref{fig:typeII}, our approach $\tau_\text{base}$ outperforms the splitting approaches in most cases. However, for the \texttt{MNIST} dataset and $d=2$ we see that the splitting approach with $10\%$ training and $90\%$ testing data (\textsc{split0.1}) performs better.}
    \label{fig:discrete}
\end{figure}

\section{Singular covariance matrices}\label{app:singular}
In the main paper we assumed that $\Sigma$ is strictly positive, i.e., non-singular. However, in practice, some eigenvalues of the covariance matrix can be sufficiently close to zero to cause numerical problems. In the case of the kernel two-sample test, this can happen if we consider kernels that are too similar and thus cause redundancy in our observations. In practice, this happens for example if we consider Gaussian kernels with too similar bandwidths on an easy problem. 

\textbf{Note on regularization:} One strategy to recover the numerical stability of the algorithm is to regularize the covariance matrix $\Sigma \rightarrow \Sigma + \lambda \mathop{I}$. Doing this indeed increases the numerical stability, since it leads to a well-behaved condition number. However, it also makes the whole approach more conservative, since the (artificially) increased variance decreases the value of the test statistic compared to the threshold. This leads to an increase of Type-II error and thus a loss of power. To evade this, we suggest the more elaborate strategy below.
 
Since $\Sigma$ is symmetric, there exists an orthonormal basis $\{v_i\}_{i\in [d]}$ and non-negative numbers $\{\lambda_i\}_{i\in [d]}$ such that
\begin{align*}
    \Sigma = \sum_{i=1}^d \lambda_i v_i v_i^\top.
\end{align*}
If $\Sigma$ is singular, we can assume WLOG that there exists $d_0\in [d]$ such that $\lambda_i = 0$ if $i\leq d_0$ and hence 
\begin{align*}
    \Sigma = \sum_{i = d_0 +1}^d \lambda_i v_i v_i^\top.
\end{align*}
Now if $v_i^\top \bm\tau \neq 0$ for some $i \in [d_0]$, we immediately know that $\bm\mu \neq \bm 0$ and could reject. In other words the signal-to-noise ratio along this direction is infinite. Thus, in the following we assume $v_i^\top \bm\tau = 0$ for all $i \in [d_0]$, and hence, $\sum_{i = d_0 +1}^d v_i v_i^\top \bm\tau = \bm\tau$. We can then rewrite the objective as follows
\begin{align*}
    \max_{\Sigma \bm\beta \geq \bm 0} \frac{ \bm \beta^\top \bm\tau}{(\bm\beta^\top \Sigma \bm\beta)^\frac{1}{2}} = \max_{\sum_{i = d_0 +1}^d \lambda_i v_i v_i^\top \bm\beta \geq \bm 0} \frac{ \bm \beta^\top \sum_{i = d_0 +1}^d v_i v_i^\top \bm\tau}{(\bm\beta^\top \sum_{i = d_0 +1}^d \lambda_i v_i v_i^\top \bm\beta)^\frac{1}{2}}.
\end{align*}
Now define $\bm\alpha := \sum_{i = d_0 +1}^d \lambda_i v_i v_i^\top \bm\beta$. Since $\Sigma$ is symmetric its pseudoinverse is given as $\Sigma^+ = \sum_{i = d_0 +1}^d \frac{1}{\lambda_i} v_i v_i^\top$ and we get
\begin{align*}
    \max_{\sum_{i = d_0 +1}^d \lambda_i v_i v_i^\top \bm\beta \geq \bm 0} \frac{ \bm \beta^\top \sum_{i = d_0 +1}^d v_i v_i^\top \bm\tau}{(\bm\beta^\top \sum_{i = d_0 +1}^d \lambda_i v_i v_i^\top \bm\beta)^\frac{1}{2}} = \max_{\bm\alpha \geq \bm 0}\frac{ \bm \alpha^\top \Sigma^+\bm\tau}{(\bm\beta^\top \Sigma^+ \bm\beta)^\frac{1}{2}}.
\end{align*}
Similar as in Remark \ref{rmk:generalization} we can define $\bm\rho := \Sigma^+\bm\tau$ and $\Sigma' = \Sigma^+$. However, in Theorem \ref{thm:continuous} we assumed that the covariance is not singular. Therefore in Theorem \ref{thm:continuous} we used $l = |\mathcal{U}|$, which corresponded to the rank of $\proj \Sigma \proj$ (see Appendix \ref{app:proof_continuous}). However, in the present case the rank of $\proj \Sigma^+ \proj$ does not equal the number of non-zero entries of $\bm\beta$. Therefore we use $l = \text{rank}(\proj \Sigma^+ \proj)$. With this we can apply Theorem \ref{thm:continuous} and get the conditional distribution under the null.

In practice, we have to treat the covariance matrix as singular if its condition number is below some threshold, as otherwise the numerical precision does not suffice to invert matrices faithfully.

\end{document}